\newtheorem{theorem}{Theorem}
\newtheorem{definition}[theorem]{Definition}
\newtheorem{lemma}[theorem]{Lemma}
\newtheorem{remark}[theorem]{Comment}
\newtheorem{counter-example}[theorem]{Counter example}
\newtheorem{open question}[theorem]{Open question}
\def\eqref#1{equation~\ref{#1}}
\def\1{\bm{1}}
\def\vs{{\bm{s}}}
\def\vu{{\bm{u}}}
\def\vv{{\bm{v}}}
\def\vx{{\bm{x}}}
\def\vz{{\bm{z}}}
\DeclareMathAlphabet{\mathsfit}{\encodingdefault}{\sfdefault}{m}{sl}
\SetMathAlphabet{\mathsfit}{bold}{\encodingdefault}{\sfdefault}{bx}{n}
\def\gA{{\mathcal{A}}}
\def\gC{{\mathcal{C}}}
\def\gF{{\mathcal{F}}}
\def\gM{{\mathcal{M}}}
\def\gT{{\mathcal{T}}}
\def\gX{{\mathcal{X}}}
\def\sD{{\mathbb{D}}}
\def\sR{{\mathbb{R}}}
\newcommand{\abs}[1]{\left \lvert #1 \right \rvert}
\newcommand{\norm}[1]{\left \lVert #1 \right \rVert}
\newcommand{\reals}{\sR}
\newcommand{\token}[1]{\left\langle\mathrm{#1}\right\rangle}
\newcommand{\PAD}{\token{PAD}}
\newcommand{\BOS}{\token{BOS}}
\newcommand{\LEFT}{\token{LEFT}}
\newcommand{\RIGHT}{\token{RIGHT}}
\newcommand{\SEP}{\token{SEP}}
\newcommand{\TURING}{\mathrm{Turing}}
\newcommand{\AUTOMATA}{\mathrm{Aut}}
\newcommand{\CIRCUIT}{\mathrm{Circuit}}
\newcommand{\If}{\mathrm{if}}
\newcommand{\ARDT}{\gT^{\mathrm{AR}}}
\title{On the Power of Decision Trees \\ in Auto-Regressive Language Modeling}
\author{
  Yulu Gan\\
  Massachusetts Institute of Technology\\
  \texttt{yulu@csail.mit.edu} \\
  \And
  Tomer Galanti\\
  Texas A\&M University\\
  \texttt{galanti@tamu.edu}\\
  \And
  Tomaso Poggio\\
  Massachusetts Institute of Technology\\
  \texttt{tp@csail.mit.edu}\\
  \And
  Eran Malach\\
  Harvard University\\
  \texttt{eran.malach@gmail.com}\\
  % Coauthor \\
  % Affiliation \\
  % Address \\
  % \texttt{email} \\
  % \AND
  % Coauthor \\
  % Affiliation \\
  % Address \\
  % \texttt{email} \\
  % \And
  % Coauthor \\
  % Affiliation \\
  % Address \\
  % \texttt{email} \\
  % \And
  % Coauthor \\
  % Affiliation \\
  % Address \\
  % \texttt{email} \\
}
\begin{document}

\maketitle

\begin{abstract}
% Large Language Models (LLM) demonstrate remarkable capabilities, but require huge and complex neural network architectures trained on vast amount of data.
% In this paper we explore an alternative approach for language modeling, using tree based models trained to perform next-token prediction. We introduce a novel pipeline for training auto-regressive decision trees (ARDTs) for next-token prediction, demonstrating their ability to generate coherent text on par with small Transformer models on simple datasets. Our approach highlights the transparency and interpretability of decision trees in language modeling. Theoretically, we show that ARDTs can simulate complex computational models, such as finite-state automata and Turing machines, offering insights into their computational capabilities. This study paves the way for simpler and more understandable models in the pursuit of artificial intelligence.

% version2
Originally proposed for handling time series data, Auto-regressive Decision Trees (ARDTs) have not yet been explored for language modeling. This paper explores both the theoretical and practical applications of ARDTs in this new context. We theoretically demonstrate that ARDTs can compute complex functions, such as simulating automata, Turing machines, and sparse circuits, by leveraging ``chain-of-thought'' computations. Our analysis provides bounds on the size, depth, and computational efficiency of ARDTs, highlighting their surprising computational power. Empirically, we train ARDTs on simple language generation tasks, showing that they can learn to generate  coherent and grammatically correct text on par with a smaller Transformer model. Additionally, we show that ARDTs can be used on top of transformer representations to solve complex reasoning tasks. This research reveals the unique computational abilities of ARDTs, aiming to broaden the architectural diversity in language model development.

\end{abstract}

% \yulu{Limitations to be mentioned in this paper: One key limitation is the difficulty of scaling up this method, which could be a common concern of reviewers...}

\section{Introduction}

In recent years, Large Language Models (LLMs) have achieved outstanding results in tasks such as natural language understanding, coding, and mathematical reasoning. LLMs predominantly utilize the Transformer architecture \cite{vaswani2023attention}, establishing it as the standard in this field. However, recent initiatives \citep{gu2023mamba, sun2023retentive, ma2023mega, de2024griffin} have begun to challenge the dominance of Transformers. These alternatives, while not yet matching Transformer performance, offer advantages in terms of inference time efficiency. Moreover, some works are revisiting traditional non-neural network models for language modeling, such as classical symbolic models \citep{wong2023word}. These developments indicate a shift towards diverse, efficient, and interpretable language modeling methodologies.

%version1
% It is worth noting tree-based models, particularly favored for tabular data \citep{grinsztajn2022tree}, continue to be significant. Among these, Auto-regressive Decision Trees (ARDTs) \citep{meek2002autoregressive} enhance standard auto-regressive capabilities, providing a simpler, more interpretable alternative to complex nonlinear methods.
% % \eran{need to emphasize that this is an old paper that doesn't do language, but we are inspired by it}.
% Experiments in their paper on various time-series datasets have shown that ARDTs outperform traditional auto-regressive models while maintaining ease of interpretation. Given these advantages, our study takes a pioneering step towards exploring the potential of ARDTs for language prediction tasks. Could ARDTs offer a viable, interpretable alternative to the complex, resource-intensive LLMs? 

%version2
Tree-based models, particularly favored for handling tabular data \citep{grinsztajn2022tree}, continue to hold significant importance. While tree-based methods are mostly used for classification and regression tasks, Auto-regressive Decision Trees (ARDTs) \citep{meek2002autoregressive} have been studied for time-series prediction, offering a simpler and more interpretable alternative to complex nonlinear approaches. Although the ARDT approach was not originally designed for language tasks, it has demonstrated considerable promise in various time-series datasets, outperforming traditional auto-regressive models while maintaining ease of interpretation. Motivated by these results, our study seeks to explore the potential of ARDTs for language prediction tasks, assessing whether they could serve as a viable, interpretable alternative to complex, resource-intensive language models.

To understand the power of ARDTs, we first conduct theoretical studies demonstrating that ARDTs, using decision trees as next-token predictors, can compute  more complex functions than traditional decision trees. We explore the classes of functions ARDTs can compute, showing their ability to simulate functions computed by automata, Turing machines, or sparse circuits through intermediate ``chain-of-thought'' computations. We provide bounds on the size, depth, and run-time (measured by the number of intermediate tokens) required for ARDTs to simulate these function classes. Our findings highlight the surprising computational capabilities of ARDTs, underscoring their potential as a powerful and interpretable alternative for language prediction tasks requiring complex function computations.

Our experimental results further demonstrate the practical utility of ARDTs in language generation tasks. Utilizing standard auto-regressive inference methods, these models generate output sequences token-by-token, appending each new token to the input of the subsequent iteration. When trained on the TinyStories dataset \cite{eldan2023tinystories}, ARDTs produce coherent and grammatically accurate text (see in Fig~\ref{fig:1}). Notably, decision tree ensembles with approximately 0.3 million parameters outperform a Transformer model with around 1 million parameters on the same Tinystories dataset, highlighting their efficiency despite a smaller size. We discuss our approach to training interpretable decision trees, which enhances the transparency of the decision-making process in language generation. 
%version2
Furthermore, we assess the ability of tree-based models to execute various logical reasoning tasks. Notably, tree ensembles built on top of transformer embeddings and trained on specific downstream tasks perform comparably to larger general models like InstructGPT \cite{ouyang2022training} and PaLM-540B \cite{chowdhery2022palm}, under the conditions of these particular tasks.

%version1
% Additionally, we evaluate the ability of ARDTs to perform various logical reasoning tasks, demonstrating that their performance is comparable with large language models such as InstructGPT \cite{ouyang2022training} and PaLM-540B \cite{chowdhery2022palm}.

% \eran{again, this last sentence is over-selling the result. We should clarify that these are decision trees used on top of transformer embedding, and unlike these general models are trained specifically on the target task}

\begin{figure*}[t]
    \centering
    \includegraphics[width=0.8\textwidth]{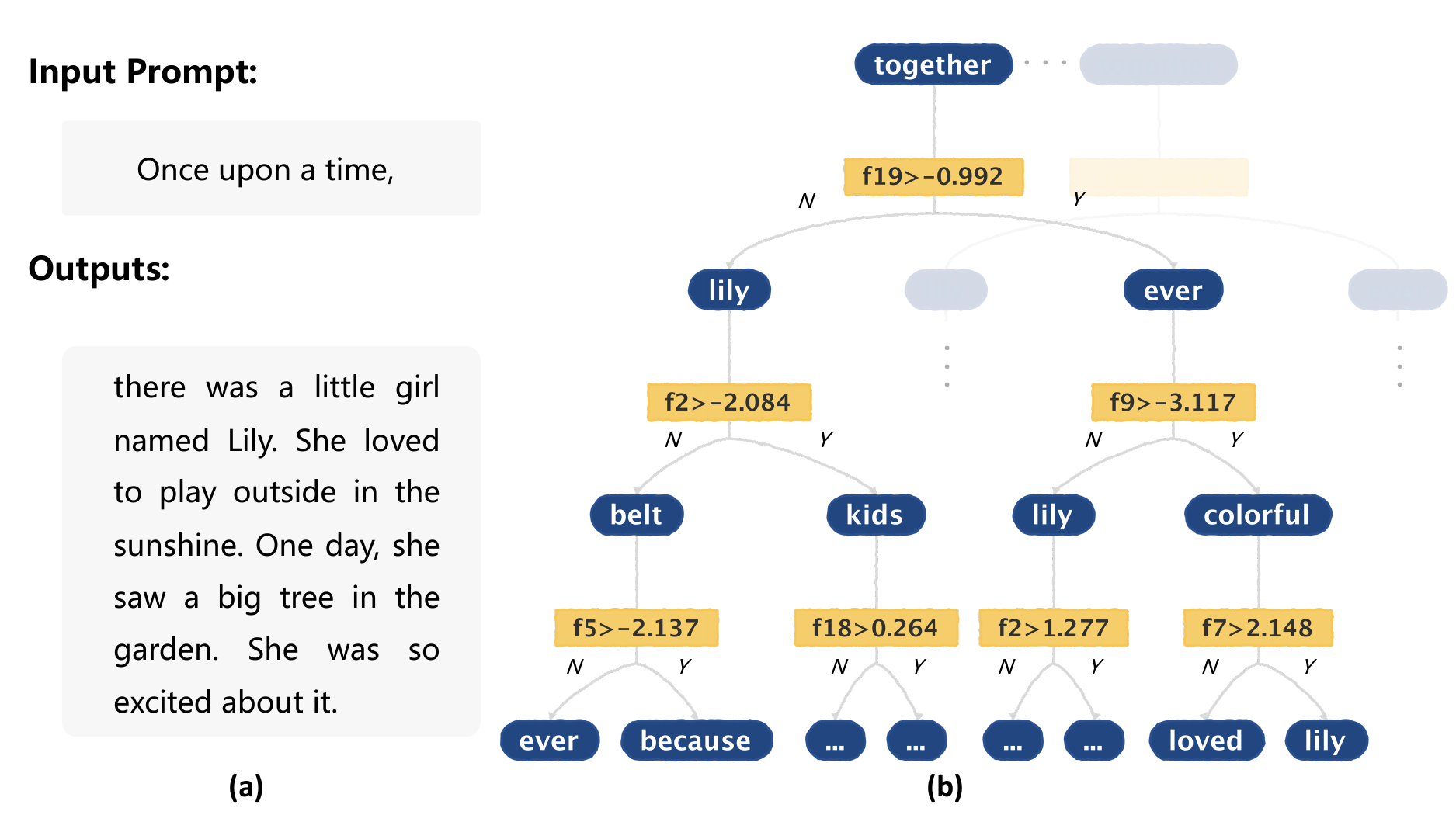}
    \caption{\textbf{(a) An example of story continuation generated by our Auto-Regressive Decision Trees.} We use decision trees and, remarkably, attain results comparable to Transformer-based models
% \eran{Use Transformer based language models instead, because we do not argue that it it comparable to ``large'' models.} \yulu{OK}
in terms of linguistic fluency.
\textbf{(b) The decision process of the decision trees.} We visualize part of the tree ensemble, and can observe which word is most relevant for the splitting rule at each node.}
\label{fig:1}
\vspace{-0.4cm}
\end{figure*}

Our contribution can be summarized as follows:
\begin{itemize}
    \item We extend the application of ARDTs to language prediction tasks, adopting a novel approach that capitalizes on their inherent simplicity and interpretability. This aims to broaden the architectural diversity in language model development.
    \item Through theoretical analysis, we demonstrate that ARDTs can compute a broader array of complex functions than previously recognized, including the simulation of automata, Turing machines, and sparse circuits. These theoretical findings deepen our understanding of ARDTs' computational capabilities.
    \item Our experimental results offer empirical evidence that ARDTs are capable of generating coherent and grammatically correct text, perform well compared to more complex models like small Transformers, and demonstrate solid reasoning abilities.
\end{itemize}

\section{Related Work}

\textbf{Decision Trees.} Tree based models have been widely used for solving different classification and regression tasks in machine learning \citep{navada2011overview}. The ID3 algorithm was introduced by \citet{quinlan1986induction}, and has been widely used for decision tree learning, along with the CART \citep{CART,lewis2000introduction} algorithm. Decision tree ensembles, such as random forests \citep{breiman2001random} and gradient boosted trees \citep{friedman2002stochastic}, are also very popular. Despite continuous advancements in deep learning, decision tree ensembles still outperform neural network based models on tabular datasets \citep{shwartz2022tabular}. Different from traditional decision trees, we use auto-regressive decision trees to perform language prediction tasks more efficiently.

\textbf{Learning Theory for Decision Trees.} There are a few theoretical works studying the power of decision trees in solving machine learning problems. The work of \cite{brutzkus2020id3} shows that the ID3 algorithm can learn sparse functions in some setting. \cite{kearns1996boosting} show that decision trees are equivalent to boosting methods for amplifying the performance of weak learners on the distribution. Other works focus on other aspects of decision tree learnability \citep{rivest1987learning, blum1992rank, ehrenfeucht1989learning, bshouty2003proper}. We note that from the approximation point of view, decision trees can be regarded as splines with free knots. For instance, piecewise constant hierarchical splines functions, similar to neural networks with threshold activation can also be seen as decision trees. Note that ReLU networks can be viewed as piecewise hierarchical linear splines \citep{Memo035, DBLP:journals/corr/Yarotsky16}, and so decision trees can represent ReLU networks (see \citet{aytekin2022neural}), though possibly with an exponential number of parameters. We note that none of the works mentioned above studies the theory of auto-regressive decision trees, which is a novel contribution of our paper.

\textbf{Decision Trees for Language.}
Despite gaining popularity in several fields of machine learning, tree based models are not widely used for language generation. Past works have utilized auto-regressive decision trees for time-series analysis \citep{meek2002autoregressive}, or use trees for basic language modeling \citep{potamianos1998study}. Decision trees were also used in parsing \citep{magerman1995statistical, heeman1999pos, nallapati2002capturing}, modeling syntax \citep{filimonov2011decision} and language identification \citep{hakkinen2001n}.

\section{Theory}\label{sec:theory}
To explore the capabilities of ARDTs, we initially undertake theoretical studies demonstrating that using decision trees as next-token predictors enables ARDTs to process significantly more complex functions than ``standard'' decision trees. Firstly, we define the theoretical setting of our analysis in Section~\ref{sec:theory-setting}. We then examine the various classes of functions that an ARDT can compute, as discussed in Sections~\ref{sec:theory-automata}, \ref{sec:theory-turing-machines}, and \ref{sec:theory-sparse-circuits}. Here, the computation involves the ARDT receiving an input sequence, such as a question, generating a series of intermediate tokens that describe the thought process, and finally producing the output token. Specifically, we demonstrate that functions computed by Automata, Turing machines, or sparse circuits can be emulated by an ARDT using these intermediate ``chain-of-thought'' computations. Additionally, we provide bounds on the size, depth, and runtime (measured by the number of intermediate tokens) required for ARDTs to simulate these classes of interest. Our findings affirm that ARDTs, by leveraging decision trees for next-token prediction, can handle far more complex functions than ``standard'' decision trees.

\begin{remark}
The results in this section are representation results. That is, we study which functions can, in theory, be represented by auto-regressive decision trees. We do not provide any formal results on whether such functions can be learned from data. The question of how decision trees can be trained to produce ``chain-of-thought'' responses to input questions is beyond the scope of this work.
\end{remark}

\subsection{Setting}\label{sec:theory-setting}

We adapt the standard definition of a decision tree, as described by \cite{quinlan1986induction}, to include modifications that allow for the processing of vector sequences of arbitrary lengths. Firstly, we establish a vocabulary $\sD$, which serves as our token dictionary. Next, we define an input embedding $\Psi : \sD \to \reals^d$. For any sequence of tokens $\vs \in \sD^n$, $\Psi(\vs) \in \reals^{n \times d}$ represents the embedding applied individually to each token. The space comprising sequences of d-dimensional vectors is denoted by $\gX = \reals^{* \times d}$. Subsequently, we define a decision tree $\gT$ that receives an input $\vx \in \gX$ and outputs a token $y \in \sD$.

In our experiments, detailed in Section \ref{sec:experiments}, we apply a weighted-average operator to the word vectors of the sequence, where the average vectors are used as an input to the decision trees. For the theoretical analysis we study a different approach for using decision trees over vector sequences, where instead of averaging word vectors we ``concatenate'' them. That is, the decision tree is applied to the $L$ most recent words, in a ``sliding window'' fashion. We note that experimentally we observed that both the ``sliding-window'' and the weighted-average approach produced similar results, and use the weighted-average technique in our experiments for computational reasons.

We start by defining a decision tree $\gT$ that gets inputs of a fixed length $L$, namely $\gT: \reals^{L \times d} \to \sD$. We refer to the value $L$ as the \emph{context length} of $\gT$, and this value will correspond to the maximal length of a sequence that affects the computation of the tree. In this case, we treat  the input $\vx \in \reals^{L \times d}$ as a vector, and let $\gT$ be a standard decision tree operating on vectors of size $L \cdot d$. Namely, $\gT$ is defined by a binary tree, where each node corresponds to an input feature $x_{i,j}$ and some threshold $\tau \in \reals$. Each leaf corresponds to some output token $y \in \sD$. The output of the tree $\gT$ is computed by starting at the root, and for each internal node with feature $x_{i,j}$ and threshold $\tau$, moving to the right node if $x_{i,j} \ge \tau$ and otherwise moving to the left node. When reaching a leaf, we output the value $y \in \sD$ corresponding to the leaf. The \emph{size} of the tree $\gT$ is the number of leaves in the tree, and its \emph{depth} is the maximum length of a path from root to leaf. Note that the runtime of computing the output of $\gT$ corresponds to the $\emph{depth}$ of the tree.

Now, given some tree over length-$L$ inputs $\gT: \reals^{L \times d} \to \sD$, we apply $\gT$ to an input of arbitrary length $\vx \in \gX$ using the following simple rule: if $\vx$ has length shorter than $L$, we pad it to length $L$ by prepending the input, adding additional padding ($\PAD$) tokens at the beginning; if $\vx$ is longer than $L$, we apply $\gT$ only to the last $L$ tokens in $\vx$. This induces a decision tree with arbitrary length inputs $\gT : \gX \to \sD$.

Finally, we use the tree $\gT$ as a next-token predictor function, applied over some input using auto-regressive computation. That is, we define a sequence-to-sequence predictor $\ARDT : \sD^* \to \sD^*$ induced from the tree $\gT$ as follows: for every input $\vs \in \sD^n$, recursively define $s_{n+i+1} = \gT(\Psi(s_1, \dots, s_{n+i}))$, and let $\ARDT(s_1, \dots, s_n) = (s_{n+1}, s_{n+2}, \dots )$. We call $\ARDT$ an \emph{auto-regressive decision tree} (ARDT).

In the rest of this section, we will analyze the capacity of ARDTs to simulate some function classes. Following \citet{malach2023auto}, we give the following definition:

\begin{tcolorbox}[colback=white!70!lightgray]
\begin{definition}
    For some class $\gF$ of functions $f: \sD^n \to \sD$, we say $\gF$ can be simulated by auto-regressive decision-trees in length complexity $T$, if for every $f \in \gF$ there exists $\ARDT$ s.t. for all $\vs \in \sD^n$, we have $\ARDT_T(\vs) = f(\vs)$ (where $\ARDT_T$ indicates the output of $\ARDT$ at iteration $T$).
\end{definition}
\end{tcolorbox}

In other words, we say that the tree $\ARDT$ can compute the function $f$, if given some input sequence $\vs$, it generates $T$ tokens followed by the correct output $f(\vs)$. That is, we allow the tree to use $T$ intermediate tokens as ``chain-of-thought'' before outputting the correct answer.

\subsection{Simulating Automata}\label{sec:theory-automata}

An automaton $\gA$ is defined over an alphabet $\Sigma$, using a set of states $Q$, an initial state $q_0 \in Q$ and a transition function $\delta : Q \times \Sigma \to Q$. We always assume that $\abs{\Sigma} \ge 2$ and $\abs{Q} \ge 2$. The automaton $\gA$ gets an input string $\vx \in \Sigma^*$, and computes an output state $\gA(\vx) \in Q$ by starting at state $q_0$ and at each iteration $i$ transitioning to the next state based on the $i$-th token $x_i$, namely $q_i = \delta(q_{i-1}, x_i)$. The automaton then returns the state reached at the final iteration.
%\footnote{It is common to define a subset $F \subseteq Q$ of \emph{accepting} states, and say that $\gA$ accepts the input if it reached an accepting state. We allow $\gA$ to output the specific final state, and not just accept/reject.}

Let $\gF^\AUTOMATA_n$ is the class of all functions computed by automata over strings of length $n$. Namely, $\gF^\AUTOMATA_n$ is the class of functions $f : \Sigma^n \to Q$ s.t. for all $f \in \gF^\AUTOMATA_n$ there exists an automaton $\gA$ s.t. $\gA(\vx) = f(\vx)$ for all $\vx \in \Sigma^n$.

The class of functions computed by Automata has been well-studied from the early days computer science theory \citep{hopcroft2001introduction}, and has various important connections to language problems. This class of functions is also interesting in the context of reasoning tasks for language modeling. For example, the \emph{Web-of-Lies} and \emph{Navigate} problems in the Big-Bench Hard dataset \citep{srivastava2023imitation} can be solved by finite state Automata.

We show that ARDTs can simulate Automata:

\begin{tcolorbox}[colback=white!70!lightgray]
\begin{theorem}
\label{thm:automata}
    Let $\sD = \Sigma \cup Q \cup \{\PAD\}$. Then, $\gF^\AUTOMATA_n$ can be simulated by ARDTs of size $O\left(\abs{\sD}^2\right)$, depth $O(\log \abs{\sD})$ and context length $L \ge n$, in length complexity $O(n)$.
\end{theorem}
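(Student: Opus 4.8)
The plan is to make the ARDT write out the run of the automaton as its chain-of-thought. On input $\vx=(x_1,\dots,x_n)\in\Sigma^n$ I want the tree to autoregressively emit the state sequence $q_1,q_2,\dots,q_n$ defined by $q_i=\delta(q_{i-1},x_i)$ (with $q_0$ the fixed initial state); since $\gA(\vx)=q_n$, the $n$-th emitted token is $f(\vx)$ and the length complexity is $T=n=O(n)$. The one arithmetic fact that makes this work with a fixed window is: just before the tree emits $q_i$, the sequence generated so far is $x_1\cdots x_n\,q_1\cdots q_{i-1}$, prefixed by $L-n$ copies of $\PAD$; inside the length-$L$ window this places $x_i$ at the \emph{fixed} position $p:=L-n+1$ and places $q_{i-1}$ at the last position $L$ (at the first step, $i=1$, position $L$ instead holds $x_n\in\Sigma$). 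So a single tree that only inspects window positions $p$ and $L$ suffices, for \emph{any} $L\ge n$; the tree is allowed to depend on $n$ and $L$, which is what lets it know the value of $p$.

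To get the claimed depth I would take $d=1$ and let $\Psi$ map the $\abs{\sD}$ tokens to distinct integers in $\{1,\dots,\abs{\sD}\}$. Reading off the identity of the token at a given window position is then a binary search on that single coordinate, costing $\lceil\log_2\abs{\sD}\rceil$ threshold comparisons. The tree $\gT$ first binary-searches the token $z$ at position $L$, then binary-searches the token $w$ at position $p$, and at the resulting leaf outputs: $\delta(q_0,w)$ if $z\in\Sigma$ (the first step, where necessarily $w=x_1$); $\delta(z,w)$ if $z\in Q$ and $w\in\Sigma$ (a generic step, where $z=q_{i-1}$ and $w=x_i$); and an arbitrary fixed token otherwise (e.g. $z$, so the tree just repeats $q_n$ forever) — this branch is reached only after $q_n$ has already been emitted, so its value is irrelevant. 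This has depth $2\lceil\log_2\abs{\sD}\rceil=O(\log\abs{\sD})$ and $O(\abs{\sD}^2)$ leaves, and no $\PAD$ ever occupies positions $p$ or $L$ during steps $1\le i\le n$, so the $\delta$-lookups are always well-defined.

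The remaining work is a one-line induction: if the first $i-1$ emitted tokens equal $q_1,\dots,q_{i-1}$, then the window before step $i$ is as described above, $\gT$ reads $z=q_{i-1}$ (or $z=x_n$ when $i=1$) and $w=x_i$, and outputs $q_i$; hence the $n$-th emitted token is $q_n=\gA(\vx)=f(\vx)$. The main obstacle is really just the window-alignment bookkeeping of the first paragraph — verifying that a variable-length $\PAD$-prefix (present when $L>n$) still leaves $x_i$ at a position independent of $i$. Once that is pinned down, the binary-search decoding (needed to reach depth $O(\log\abs{\sD})$ rather than the $O(\abs{\sD})$ depth of a naive one-hot scheme), the $\delta$-table at the leaves, and totality of the tree beyond step $n$ are all routine.
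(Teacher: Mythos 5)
Your proposal is correct and follows essentially the same route as the paper's proof: both emit the state sequence $q_1,\dots,q_n$ as chain-of-thought and realize the transition $\delta$ with a small sliding-window $2$-Junta tree that reads the most recent token (the previous state) together with the input symbol sitting at a fixed window position, each decoded in depth $O(\log\abs{\sD})$. The only cosmetic differences are that the paper handles the first step by extending $\delta$ so that $\delta(\cdot,\PAD)=q_0$ rather than by your type-check on whether the last token lies in $\Sigma$ or $Q$, and it uses an $O(\log\abs{\sD})$-dimensional Boolean embedding in place of your one-dimensional integer encoding; your window-alignment bookkeeping is, if anything, pinned down more carefully than the paper's.
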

\end{tcolorbox}

Note that ARDTs simulate Automata very efficiently: the total run-time of the ARDT guaranteed by Theorem \ref{thm:automata} is $O(n \log \abs{\sD})$, which corresponds to the time it takes to read all the input bits. In this sense, no algorithm can simulate Automata significantly faster than ARDT.

In the proof, we construct an ARDT that, at every iteration $i$, outputs the state of the Automaton at step $i$ (denoted $q_i$). The state at step $i+1$ is only a function of the $i$-th state, given by the most recent token generated by the model; and the $i$-th input, which is always given by looking back $n+1$ tokens.
Therefore, a simple tree, applied as a \emph{sliding-window} over the input, can compute the transition matrix to find the next state. The full proof is given in Appendix \ref{appendix::proofs}.

Next, we show that the above result implies a \emph{separation} between ARDTs and standard decision trees. Specifically, we show that if we use a decision-tree over the input to directly predict the final output of the Automata, without outputting intermediate states, then the size of the decision tree must be exponential in the length of the input:

\begin{tcolorbox}[colback=white!70!lightgray]
\begin{theorem}
\label{thm:lower_bound}
    There exists some $f \in \gF^\AUTOMATA_n$ s.t. any decision tree that computes $f$ has size $\ge \Omega(2^n)$.
\end{theorem}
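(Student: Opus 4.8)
The plan is to exhibit a specific automaton whose output function, viewed as a Boolean function of the input string, requires exponentially many leaves in any decision tree. The natural candidate is an automaton computing a "parity-with-memory" or, more simply, a function like $\mathrm{PARITY}$ (or an address/multiplexer function) over the $n$ input symbols. Take $\Sigma = \{0,1\}$, let $Q = \{q_{\even}, q_{\odd}\}$, start at $q_{\even}$, and let $\delta$ flip the state on reading a $1$ and keep it on reading a $0$. Then $\gA(\vx) = q_{\odd}$ iff $\sum_i x_i$ is odd, so $\gA$ computes the parity function on $n$ bits, and clearly $\gA$ gives rise to some $f \in \gF^\AUTOMATA_n$.

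The key steps, in order: (1) verify that parity genuinely lies in $\gF^\AUTOMATA_n$ under the token-embedding conventions of Section~\ref{sec:theory-setting}, i.e. that with context length $L = n$ the input $\vx \in \{0,1\}^n$ is fed (after padding) into the tree as a length-$nd$ real vector, so "a decision tree that computes $f$" means a standard decision tree over these $nd$ coordinates whose leaf value equals $\mathrm{PARITY}(\vx)$ for every $\vx$; (2) invoke (or reprove in one line) the classical fact that any decision tree computing $n$-bit parity has at least $2^n$ leaves. The standard argument: parity is \emph{evasive} --- along every root-to-leaf path, \emph{all} $n$ bits must be queried, because if some bit $x_i$ is unqueried on a path reaching a leaf, flipping $x_i$ on an input consistent with that path flips the parity but not the leaf reached, a contradiction; hence every root-to-leaf path has length $\ge n$, and since each of the $2^n$ inputs follows a distinct complete assignment to all $n$ bits and each leaf is consistent with exactly one such assignment, there are $\ge 2^n$ leaves. (3) Conclude size $\ge 2^n = \Omega(2^n)$.

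One subtlety to handle carefully: the decision tree in the paper's model splits on real features $x_{i,j} \ge \tau$, not on Boolean literals, and the embedding $\Psi$ may map $0$ and $1$ to arbitrary $d$-dimensional vectors. So the evasiveness argument must be phrased over the \emph{effective} Boolean input: since the tree must be correct on all $2^n$ binary strings, restrict attention to the $2^n$ embedded points $\Psi(\vx)$, $\vx \in \{0,1\}^n$; each internal node's test $x_{i,j} \ge \tau$ induces a partition of these $2^n$ points, and the same "if a coordinate-block $i$ is never effectively separated along a path, flipping $x_i$ breaks correctness" argument goes through once we note that distinguishing the value of $x_i$ requires at least one node along the path whose test depends on block $i$ and actually separates the two possible embeddings $\Psi(0)$ and $\Psi(1)$ in that block. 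This is the main obstacle --- making the classical evasiveness/$2^n$-leaf bound robust to the real-valued, embedded encoding --- but it is routine once set up: the function restricted to $\{\Psi(0),\Psi(1)\}^n$ is a genuine parity function in $n$ Boolean variables, so the tree, restricted to these inputs, is (after collapsing each node to the Boolean test "is block $i$ equal to $\Psi(1)$ or $\Psi(0)$", which is what any separating threshold does on a two-point set) a Boolean decision tree for parity, and the $2^n$ bound applies verbatim.
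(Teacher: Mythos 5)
Your proposal is correct and follows essentially the same route as the paper: exhibit the two-state parity automaton and invoke the evasiveness of $n$-bit parity to force $2^n$ leaves via the flip-an-unqueried-bit argument. If anything, you are slightly more careful than the paper's own proof, both in counting leaves (one input per leaf once all bits are queried) and in addressing how threshold splits on the embedded coordinates $\Psi(x_i)$ reduce to Boolean queries on the two-point set $\{\Psi(0),\Psi(1)\}$, a point the paper glosses over.
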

\end{tcolorbox}

This shows that the fact that ARDTs can perform intermediate computations auto-regressively (e.g., perform \emph{chain-of-thought}) significantly improves their efficiency\footnote{This is an example of how compositional sparsity can defeat the curse of dimensionality \citep{Memo138}. A function may not be approximated by a decision tree without an exponential number of parameters but may be represented efficiently by composing intermediate sparse functions, as ARDTs do.}. To prove the result, we show that computing the \emph{parity} of a sequence of bits (i.e., whether the number of bits is even or odd) requires a tree of exponential size, but can be easily computed by a simple 2-state Automaton.

\begin{proof}[Proof of \autoref{thm:lower_bound}]
Consider the binary alphabet $\Sigma = \{0,1\}$ and the state set $Q = \{\text{even}, \text{odd}\}$, with $\abs{\Sigma} = 2$ and $\abs{Q} = 2$. We define a function $f: \Sigma^n \rightarrow Q$ as follows:
\[
f(\mathbf{x}) = 
\begin{cases}
\text{even} & \text{if } \sum x_i \mod 2 = 0, \\
\text{odd} & \text{otherwise}.
\end{cases}
\]

The function $f$ describes the parity of the sum of bits in $\mathbf{x}$ and can be efficiently computed by an automaton that toggles between states \text{even} and \text{odd} upon encountering a $1$.

Suppose a decision tree $\mathcal{T}$ computes $f$. We claim that the size of $\mathcal{T}$ must be at least $2^n$. Assume for contradiction that $\mathcal{T}$ has fewer than $2^n$ leaves. Since $\mathcal{T}$ is a decision tree, we assume that all its leaves are reachable by some input $\mathbf{x} \in \{0, 1\}^n$.

Consider a leaf $l$ of $\mathcal{T}$ reached by some input $\mathbf{x}$, at a depth less than $n$. This implies that there exists at least one bit index $j \in [n]$ such that no decision node in $\mathcal{T}$ queries $x_j$ on the path to $l$. Define $\mathbf{x}' \in \{0, 1\}^n$ by flipping $x_j$ in $\mathbf{x}$, while keeping all other bits unchanged:
\[
x'_i = \begin{cases}
x_i & \text{if } i \neq j, \\
\neg x_j & \text{if } i = j.
\end{cases}
\]

Since $\mathbf{x}'$ alters $\mathbf{x}$ only at the unqueried index $j$, it follows the same path in $\mathcal{T}$ and reaches the same leaf $l$. Therefore, $\mathcal{T}(\mathbf{x}) = \mathcal{T}(\mathbf{x}')$. However, the definition of $f$ guarantees $f(\mathbf{x}) \neq f(\mathbf{x}')$ as their parities are different, leading to a contradiction. Thus $\mathcal{T}$ cannot compute $f$ with fewer than $2^n$ leaves.
\end{proof}

\subsection{Simulating Turing Machines}\label{sec:theory-turing-machines}

A Turing machine $\gM$ is defined over an alphabet $\Sigma$, using a space set $Q$, initial state $q_0 \in Q$ and transition function $\delta : Q \times \Sigma \to Q \times \Sigma \times \{\LEFT, \RIGHT\}$. The Turing machine has a tape, where each cell contains a symbol from $\Sigma$. The \emph{head} of the Turing machine is initialized at the leftmost cell on the tape in state $q_0 \in Q$. At each iteration of the machine, it reads the symbol $s \in \Sigma$ and given the head state $q \in Q$ uses $\delta(q,s)$ to determined the new state of the head, the symbol to write under the head, and whether to move the head left or right on the tape. 

In our setting, we consider Turing machines with fixed memory $M$, i.e. Turing machines with access to a tape with $M$ cells. In particular, this means that the Turing machine $\gM$ operate on inputs with $< M$ tokens. At the initial step, the input is written on the tape. If the input size is shorter than $M$, we add empty tokens $\{\emptyset\} \in \Sigma$ after the input sequence. We consider Turing machines with fixed runtime $T$, namely we let the machine run for $T$ iterations and then halt it. The output of the machine is the rightmost symbol on the tape after $T$ iterations. So, we define $\gM : \Sigma^M \to \Sigma$ to be the function computed by the machine after $T$ steps. We denote by $\gF^\TURING_{M,T}$ the class of functions computed by Turing machines with memory of size $M$ and runtime $T$.
% \footnote{It is common to define a set of final states $F \subseteq Q$, and say that the machine halts and accepts the input if it reaches one of the final states. We instead always halt the machine after $T$ steps, and consider its output as the final token on the tape.}

\begin{remark}
    Turing machines are typically defined with infinite number of tape cells, and are allowed to run arbitrarily long before halting. However, for every given input length, any \emph{computable} function always uses a fixed memory and run-time (which depend on the input length).
\end{remark}

We now show any Turing machine with fixed memory and run-time can be simulated by an ARDT:

\begin{tcolorbox}[colback=white!70!lightgray]
\begin{theorem}
    \label{thm:turing}
    Let $\sD = \Sigma \cup Q \cup \{\PAD, \SEP\}$\footnotemark. Then, $\gF^\TURING_{M,T}$ can be simulated by ARDTs of size $O\left(\abs{\sD}^4\right)$, depth $O(\log \abs{\sD})$ and context length $L = M+3$, with length complexity $O(M T)$.
\end{theorem}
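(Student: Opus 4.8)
The plan is to mimic the automata construction but now carry the full tape configuration as part of the generated token stream. At a high level, I will have the ARDT, at each "macro-step," write out a block of $M$ symbols describing the current tape contents, together with a marker for the head position and the current state, separated from the previous block by the $\SEP$ token. One macro-step of the Turing machine (one application of $\delta$) is realized by $M$ micro-steps of the ARDT, one per tape cell: when the ARDT is about to emit the symbol for cell $j$ of the new configuration, it needs to know (i) the old symbol at cell $j$, (ii) whether the head was at cell $j$, $j-1$, or $j+1$ in the old configuration, and (iii) the old state. All of this information is available within a bounded sliding window of the most recently generated tokens, which is exactly why the context length $L = M+3$ suffices: looking back $M+1$ positions recovers the corresponding cell of the previous block, and a couple of extra slots let the tree see the $\SEP$ marker and the head/state annotation. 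So the construction is: a single decision tree $\gT : \reals^{(M+3)\times d} \to \sD$ applied in sliding-window fashion, whose leaves encode the local update rule derived from $\delta$.

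The concrete steps, in order: (1) Fix the token/encoding scheme — decide exactly how a configuration block of length $M$ is laid out, where the head-position and state information sit (e.g. immediately after $\SEP$, or interleaved), and how the initial input $\vs \in \Sigma^M$ is turned into the first block with head at the leftmost cell in state $q_0$ and $\emptyset$'s padding the unused cells. (2) Define the tree's behavior on a generic window: given the window contents, determine which "phase" we are in (start of a new block right after $\SEP$, middle of a block, end of a block), read off from the window the relevant old cell symbol, the old state, and the head-offset, apply $\delta$ to decide the written symbol and new head location, and output the appropriate next token (either a tape symbol, or the $\SEP$/state annotation when a block boundary is reached). (3) Verify the invariant by induction on macro-steps: after $k$ macro-steps (i.e. after $kM + O(1)$ generated tokens) the most recent block equals the Turing machine's configuration after $k$ steps; hence after $T$ macro-steps, which is $O(MT)$ tokens, the rightmost tape symbol — read off as one designated position of the final block — equals $\gM(\vs)$. (4) Bound the tree's size and depth: the leaves must distinguish all relevant local contexts, and the number of these is polynomial in $\abs{\sD}$ — roughly $\abs{\sD}^4$ coming from (old symbol) $\times$ (old state) $\times$ (head offset among a constant number of cases) $\times$ (phase / boundary bookkeeping), with the query path over the relevant features having length $O(\log\abs{\sD})$ if we binary-search over the finite feature alphabet. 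This yields size $O(\abs{\sD}^4)$, depth $O(\log\abs{\sD})$, context length $M+3$, and length complexity $O(MT)$.

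The main obstacle I expect is the bookkeeping at block boundaries: because one macro-step spans $M$ micro-steps and the head may move left or right, the tree must correctly propagate the head position from the old block to the new block using only local window information, and it must emit the state/head annotation at precisely the right moment so that the $\SEP$-aligned structure stays rigid (constant-size offsets between successive blocks). Getting the window width exactly right — showing that $M+3$, and not more, is enough to see simultaneously the aligned old cell, the $\SEP$ marker, and the old head/state data — is the delicate point; if the layout is chosen carelessly one needs a wider window. Everything else (defining $\delta$-to-leaves, the inductive correctness argument, and the counting for size/depth) is routine once the encoding is pinned down, analogous to the automata case but with a length-$M$ block replacing a single state token.
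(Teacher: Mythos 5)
Your plan is correct and follows essentially the same route as the paper's proof: encode each Turing-machine configuration as a $\SEP$-separated block of $M+1$ tokens with the state token placed inline immediately before the head's cell, define a local update rule on a window of four consecutive old-block tokens (covering old symbol, state token, and both head-move directions), realize it as a $4$-Junta tree of size $O(\abs{\sD}^4)$ and depth $O(\log\abs{\sD})$ applied in sliding-window fashion with $L=M+3$, and prove correctness by induction over blocks. The boundary bookkeeping you flag as delicate is handled in the paper exactly as you anticipate, by a small modification of the update rule that emits $\SEP$ and the initial state $q_0$ at block boundaries.
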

\end{tcolorbox}

\footnotetext{We introduce a new separator token $\SEP$, that is used during the generation of the ARDT, but is not part of the alphabet or state set of the Turing machine.}

To prove the result, we show that an ARDT can compute the state of the Turing machine at each iteration. Specifically, we encode the state of the machine as a sequence of tokens from $\sD$, where we put a token $q \in Q \subseteq \sD$ indicating the state of the head before the token that the head reads. This way, the transition between states is a function that only depends locally on the tokens surrounding the position of the head, where all other (non-state) tokens can be copied as-is from one state to the next. Similarly to the proof in the previous section, this operation can be realized by a small \emph{sliding-window} tree. The full proof is given in Appendix \ref{appendix::proofs}.

\subsection{Simulating Sparse Circuits}\label{sec:theory-sparse-circuits}

A circuit $\gC$ over some alphabet $\Sigma$ is defined as a directed-acyclic-graph (DAG), with $n$ input nodes and one output node. Each internal (non-input) node with $k$ incoming edges corresponds to some function $g : \Sigma^k \to \Sigma$ computed by the node over its incoming inputs. For some input $\vx \in \Sigma^n$, the output of the circuit $\gC$ is the value of the output node, when setting the input nodes of $\gC$ to $x_1, \dots, x_n$. The size of the circuit $\gC$ is the number of nodes in the computational graph. We say that $\gC$ is $k$-sparse, if the maximal in-degree of every node in the graph is $k$. Denote by $\gF^{\CIRCUIT}_{N,k}$ the class of functions computed by $k$-sparse circuits of size $N$.

We note that sparse circuits are an extension of sparse Boolean circuits, and so can represent Turing machines with bounded memory \citep{arora2009computational}. In this sense, this class is ``equivalent'' to the class of functions computed by Turing machines. However, some functions may be more efficient to compute using sparse circuits, and so it is interesting to understand how ARDTs can directly simulate sparse circuits, as demonstrated in the following theorem:

\begin{tcolorbox}[colback=white!70!lightgray]
\begin{theorem}
    \label{thm:circuits}
    Let $\sD = \Sigma \cup \{ \PAD \}$. Then, $\gF^\CIRCUIT_{N,k}$ can be simulated by ARDTs of size $O\left(N\abs{\sD}^k\log \abs{\sD}\right)$ and context length $L \ge N$, in length complexity $O(N)$.
\end{theorem}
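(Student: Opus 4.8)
The plan is to mimic the strategy used for Automata and Turing machines: construct an ARDT that performs the circuit computation ``node by node'' using intermediate tokens as a scratchpad, where each intermediate token records the value of one gate of $\gC$. First I would fix a topological order on the $N$ nodes of the circuit so that node $i$ depends only on nodes with index $< i$ (input nodes occupying positions $1,\dots,n$). The ARDT's intended behaviour on input $\vx = (x_1,\dots,x_n)$ is: at iteration $j$ (for $j = 1,\dots,N-n$) emit the value $v_{n+j} \in \Sigma$ of the $(n+j)$-th gate; after $N - n$ iterations the most recent token is the value of the output node, which is $\gC(\vx)$. Thus the length complexity is $N - n = O(N)$, matching the claim.

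The key observation is that computing the next token is a \emph{local, sparse} operation on the current sequence. When the ARDT has so far produced the prefix $(v_1,\dots,v_{m})$ with $m \ge n$ (where $v_1,\dots,v_n = x_1,\dots,x_n$), it must output $v_{m+1} = g_{m+1}(v_{i_1},\dots,v_{i_k})$, where $i_1 < \dots < i_k \le m$ are the $k$ (or fewer) parents of node $m+1$. Here I would hard-code, into the tree, the fixed wiring of the circuit: the position $m+1$ being generated is determined by counting how many tokens have been generated, and the offsets $m+1-i_1,\dots,m+1-i_k$ of its parents relative to the current end of the sequence are constants depending only on $\gC$. A single decision tree over the context window of length $L \ge N$ can read off the $k$ relevant feature-coordinates and then branch on their values: since each of the $k$ parent tokens takes one of $\abs{\sD}$ values, there are at most $\abs{\sD}^k$ relevant leaves for a fixed target position, and to select the correct gate function $g_{m+1}$ and its parent offsets the tree must also effectively know $m+1$, contributing a factor of $N$. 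Reading/comparing a single token value against all alternatives costs depth $O(\log\abs{\sD})$, and there are $O(N)$ possible positions and $\abs{\sD}^k$ leaf-configurations per position, giving size $O(N \abs{\sD}^k \log\abs{\sD})$ as claimed. (To make ``knowing $m+1$'' precise, I would note that the number of non-$\PAD$ tokens currently in the window, or equivalently the number of leading $\PAD$ tokens, encodes $m$; the tree branches on this first — an $O(\log N)$-depth sub-tree — and within each branch hard-codes the gate for that position. The total size is the sum over positions of $O(\abs{\sD}^k)$, i.e. $O(N\abs{\sD}^k)$, times the $O(\log\abs{\sD})$ factor for value comparisons.)

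The one subtlety I would be careful about is the padding/indexing bookkeeping. The ARDT's ``apply $\gT$ to the last $L$ tokens, prepending $\PAD$ if shorter'' convention means that early in the generation the window is left-padded, and the target position $m+1$ is recovered from the count of $\PAD$ tokens (or from $L$ minus the number of real tokens); once $m$ reaches $L$ we would need $L \ge N$ so that no real token is ever dropped before it is needed — this is exactly the hypothesis $L \ge N$ in the statement, and it guarantees that all $k$ parents of every gate are still inside the window when that gate is evaluated, since parents have smaller indices. I expect the main obstacle to be nothing deep but rather stating this position-bookkeeping cleanly: making the argument that a single fixed tree of the claimed size simultaneously handles all $N - n$ generation steps (each with its own gate function and its own parent offsets) by first branching on the current length and then on the relevant parent values. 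Once that is set up, correctness is an immediate induction on $j$: assuming $v_1,\dots,v_{n+j}$ equal the true gate values of $\gC$ on $\vx$, the tree outputs $v_{n+j+1} = g_{n+j+1}(\text{true parent values}) $, which is again the true value; the base case $j=0$ is just $v_1,\dots,v_n = \vx$. The full proof is given in Appendix \ref{appendix::proofs}.
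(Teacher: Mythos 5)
Your proposal is correct and follows essentially the same route as the paper's proof: order the gates topologically, emit one gate value per iteration, observe that each gate is a $k$-Junta realizable by a tree of size $O(\abs{\sD}^k)$, and dispatch among the $N-n$ per-gate trees by detecting the number of leading $\PAD$ tokens in the window. The only (immaterial) difference is the dispatch mechanism — you propose an $O(\log N)$-depth branch on the padding count, while the paper walks a linear ``rightmost spine'' that tests the leading tokens against $\Psi(\PAD)$ bit by bit and hangs the appropriate Junta tree off each left child — both yield the claimed size bound.
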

\end{tcolorbox}

\begin{proof}[Proof of Theorem~\ref{thm:circuits}]
Consider a $k$-sparse circuit $\mathcal{C}$ with $N$ total nodes, where $N-n$ are internal nodes. Let $g_1, \ldots, g_{N-n}: \Sigma^k \to \Sigma$ be the functions computed at the internal nodes, ordered topologically so that each function depends only on the inputs or the results of preceding nodes. Let $g_{N-n}$ denote the function computed by the output node.

Define $f_i: \Sigma^{n+i-1} \to \Sigma$ as the output of the $i$-th node in this ordering, considering all inputs and outputs from previous nodes. Each $f_i$ is effectively a $k$-Junta. By Lemma 10, there exists a decision tree $\mathcal{T}_i$ of size $O\left(|\mathbb{D}|^k\right)$ such that $\mathcal{T}_i(\Psi(\boldsymbol{x})) = f_i(\boldsymbol{x})$ for all $\boldsymbol{x} \in \Sigma^{n+i-1}$.

To accommodate inputs $\boldsymbol{x} \in \Sigma^N$, we modify each tree $\mathcal{T}_i$ to ignore the first $N-n-i+1$ inputs. This adaptation does not affect the size of the tree.

Let $\boldsymbol{z} = \Psi(\langle\mathrm{PAD}\rangle) \in \{0,1\}^d$. Construct a tree as follows: begin with the rightmost branch of the tree, using functions $h_{1,1}, \ldots, h_{1,d}, \ldots, h_{N-n,1}, \ldots, h_{N-n,d}$. For each node $i \in [N-n]$ and each bit $j \in [d]$, define:
\[
h_{i,j} = 
\begin{cases} 
1\{\Psi(\boldsymbol{x})_{i,j} \geq 1\} & \text{if } z_j = 1, \\
1\{\Psi(\boldsymbol{x})_{i,j} < 1\} & \text{if } z_j = 0.
\end{cases}
\]
Attach tree $\mathcal{T}_{N-n-i+1}$ at each left node $(i, j)$.

Observe that during the $i$-th iteration, the string begins with $N-n-i \langle\mathrm{PAD}\rangle$ tokens, allowing $\mathcal{T}_i$ to process the pertinent part of the input. After $N-n$ iterations, the constructed tree calculates the output token as specified by $\mathcal{C}$.
\end{proof}

% In the proof, we construct an ARDT that generate the output of every gate one after another, in topological order. Note that unlike Automata and Turing machines, which have a fixed transition function, circuits require computing a different function of the input for each gate. To do so, we use the decision tree to ``count'' the number of tokens in the generated sequence, and apply the correct function depending on the count. The full proof is in Appendix \ref{appendix::proofs}.

\section{Experiments}
\label{sec:experiments}

In this section, we experimentally validate the capabilities of ARDTs as demonstrated in the previous section and prove their language modeling potential. In Section~\ref{exp:tinystories}, we first train a model based on ARDTs and test its ability to continue stories on Tinystories~\cite{eldan2023tinystories}, which involves extending narratives similar to a finite state automaton. ARDTs generate coherent text that builds on existing stories, also requiring the interpretation of complex contexts and emotions. This showcases the effectiveness of sparse circuits in managing significant yet limited inputs.

Additionally, in Section~\ref{exp:reasoning}, we assess the model's reasoning abilities on the Big-Bench-Hard~\cite{suzgun2022challenging} dataset, where tasks often involve evaluating the truthfulness of propositions, effectively emulating a Turing machine as it processes inputs to determine a definitive outcome (true or false).

% \eran{maybe give a two sentences about the content of this section, and how it relates to the previous section}

% To give empirical results to demonstrate the effectiveness of ARDTs, we introduce our experimental results in this section. In \autoref{exp:dataset}, we desc In \autoref{exp:setting} we give a detailed description of how we train ensembles of decision trees on token sequences, and how we can use them to generate language using auto-regressive inference. In \autoref{exp:tiny-stories} we use our method for training auto-regressive decision trees (ARDT) on a simple natural language dataset, and show that the ARDT model achieves performance comparable to small transformers, in terms of the quality of the generated text, measured by GPT-4 grading. In \autoref{exp:interpretability} we demonstrate how we can use the decision tree model to gain more transparency into the language generation process, resulting in more interpretable language model. Finally, in \autoref{exp:reasoning} we show that decision trees can be used to solve complex reasoning tasks.

\subsection{Setting}\label{exp:setting}
% \eran{I would prefer to have the setting section shorter, and contain only information that is relevant for both sections. Then, move to sections 4.2 and 4.3 the details relevant to the experiments in these sections.}

% As for the evaluation metric, we use a score provided by GPT-4 with the input prompt: "Please score each story out of 10, with precision to one decimal place, primarily focusing on the coherence of the language." We report the average score over multiple stories generated by each model.

To align with the theory section, we designed our experiments to closely mirror the theoretical settings as closely as possible. We here provide a detailed description of our implementation of Auto-regressive Decision Trees (ARDTs) for next-token prediction tasks. Our objective is to utilize ARDTs as a language model that receives a sequence of input tokens $x_1, \dots, x_n$ and predicts the subsequent token $x_{n+1}$. Initially, we employ a Word2Vec embedding \cite{mikolov2013efficient}, denoted by $\Psi$, to convert the sequence tokens into word embeddings $\Psi(x_1), \dots, \Psi(x_n), \Psi(x_{n+1}) \in \reals^{100}$. We then compute a weighted average of these embeddings with exponential decay, prioritizing the most recent tokens: $\overline{\vv} = \sum_{i=1}^n \alpha^{n-i+1} \Psi(x_i)$, where $\alpha \in (0,1)$. Using XGBoost \cite{xgboost}, we train an ensemble of decision trees, $\gT$, which takes the input vector $\overline{\vv}$ and predicts the embedding of the next token $\Psi(x_{n+1})$, aiming to minimize the mean squared error (MSE) loss. We train this model using sequences of varying lengths sampled from our dataset. During inference, the model generates text auto-regressively. At each step, it receives the current sequence $\overline{\vv}$, outputs the predicted embedding of the next token $\hat{\vu} = \gT(\overline{\vv})$, and identifies the token whose embedding is closest to this prediction, i.e., $\hat{x} = \arg \min_x \norm{\Psi(x) - \hat{\vu}}_2$. This token is then used as the next token in the sequence. The input vector is updated with the new token using $\overline{\vv} \leftarrow \alpha \overline{\vv} + \Psi(\hat{x})$, and the process repeats for the next iteration. Figure \ref{fig:pipeline} illustrates the training and inference pipeline.

% \textbf{To demonstrate our theoretical findings empirically}, we evaluate ARDTs on story continuation tasks using \textit{TinyStories dataset}. This requires the model to extend a narrative, mirroring how a finite state automaton processes and maintains sequences. ARDTs simulate this automaton-like behavior, generating coherent text that builds on the existing narrative. This task also involves interpreting complex contexts and emotions, which showcases the strengths of Sparse Circuits in managing significant yet limited inputs.
% The \textit{Big-Bench-Hard dataset} encompasses various language reasoning tasks that require complex logic and decision-making, similar to a Turing Machine. These machines can perform any computable operation, including logical reasoning. In this dataset, tasks may involve assessing the veracity of propositions, effectively emulating a Turing Machine as it processes inputs to produce a clear outcome (true or false).

\begin{remark}
    % We note that the setting described above deviates from the experimental setting. 1) We analyze the representation power a single auto-regressive decision tree, but use decision tree ensembles for experiments. We note that tree ensembles are strictly more expressive, and so our positive results trivially hold for auto-regressive tree ensembles.
    % 2) For simplicity, we study trees that directly generate a single output token each iteration, instead of outputting a word vector (as in the experimental setting).
    % 3) As previously discussed, the trees we analyze here operate on token vectors concatenated over a sliding-window, instead of using vector averages. 
    We note that the setting described above deviates from the theory setting. 1) While the theoretical analysis focuses on the representational power of a single auto-regressive decision tree, the experiments utilize ensembles of decision trees. Notably, tree ensembles are more expressive, which suggests that our positive findings should also extend to these ensembles.
    2) For simplicity, our theoretical study examines trees that generate a single output token in each iteration, rather than producing a word vector, which is the approach used in the experiments.
    3) The decision trees discussed theoretically operate on concatenated token vectors within a sliding window, in contrast to the use of vector averages in the experimental setting.
    % \eran{since the theory now comes first, maybe move this comment to the experiments section, and explain how the experimental setting deviates from the theory}
\end{remark}

\begin{figure*}[t]
    \centering
    \includegraphics[width=0.9\linewidth]{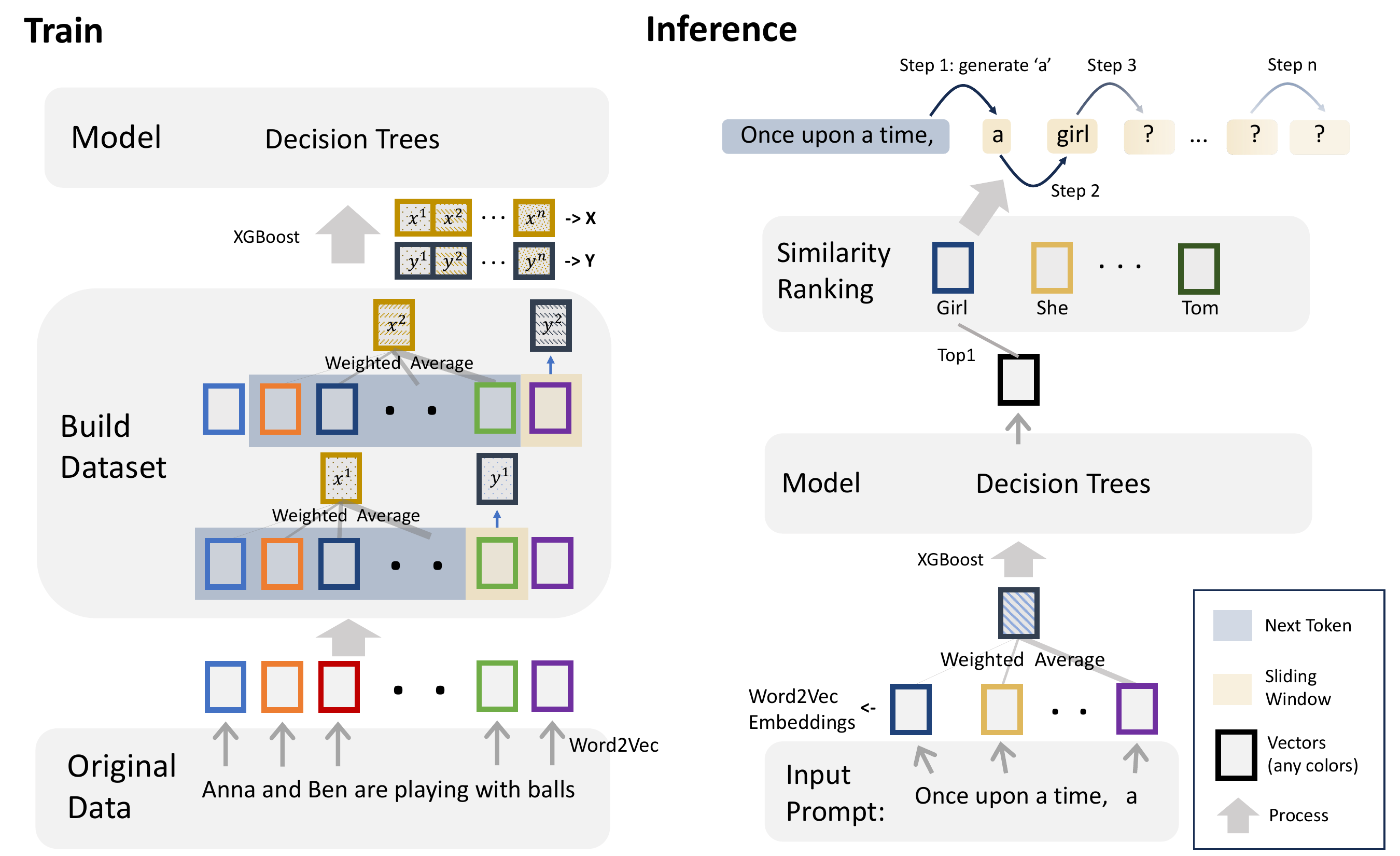}
    \caption{\textbf{The Pipeline of Our Method.} \textbf{(a) Training.} First, we employ a Word2Vec model to convert words into embeddings. Next, we utilize a sliding window approach to construct a dataset for training decision trees. Within this window, we performed a weighted average calculation, and the following token after the window was used as the label. \textbf{(b) Inference.} We use our trained Decision Trees for the purpose of next-token prediction.}
    \label{fig:pipeline}
\vspace{-0.3cm}
\end{figure*}

\subsection{The Ability to Generate Coherent Stories}
\label{exp:tinystories}

We test ARDTs' ability to generate stories with the \textit{TinyStories}~\cite{eldan2023tinystories} dataset, which is a widely-used high-quality synthetic dataset of short stories that contain words that a 3 to 4-year-old child can understand, generated by GPT-3.5 and GPT-4. Details can be found in Appendix \ref{appendix::dataset-details}.

For experiments conducted on TinyStories, we strictly follow \cite{eldan2023tinystories} and employ the multidimensional score provided by GPT-4, as detailed in Appendix~\ref{appendix:metrics}. 

For baselines to compare with ARDTs, we selected several Transformer-based models. These include two small Transformers trained on the TinyStories dataset (TinyStories-1M and TinyStories-33M~\cite{eldan2023tinystories}), as well as GPT-4~\cite{openai2023gpt4}, to illustrate the performance differences between non-neural network methods and the Transformer architecture.

For our evaluation, we provide the models with 100 story beginnings (refer to examples in Appendix~\ref{appendix:example_beginnings}), each consisting of fewer than 6 words, generated by GPT-4. 
We use these beginnings as inputs to the model, allowing the it to perform next token prediction, ultimately generating outputs of 20 words. For the ground truth row in Table~\ref{tab:1}, we grade complete stories from the dataset.

As shown in Table~\ref{tab:1}, ARDTs achieved performance comparable to GPT-4 and TinyStories-33M on four metrics: grammar, creativity, consistency, and plot. Our model outperforms TinyStories-1M, a Transformer-based model with 1M parameters, despite being smaller in size. These results demonstrate that although tree-based models are generally considered inferior to large neural networks, surprisingly, they can compete with small Transformers when trained on the TinyStories dataset.

\begin{table}[htbp]
  \centering
    \small
  \renewcommand\arraystretch{1.2}
  \setlength\tabcolsep{4pt}
  \caption{Experiment Results on TinyStories: The results show that an auto-regressive tree can achieve better performance as the GPT-Neo architecture and exhibit competitive performance compared to both GPT-4 and TinyStories-33M.}
    \vspace{0.2cm}
    \begin{tabular}{ccccccc}
    \toprule
          & Model Architecture & Parameters$^*$ & Grammar$\dag$ & Creativity$\dag$ & Consistency$\dag$ & Plot$\dag$ \\
    \midrule
    TinyStories-1M & GPT-Neo & 1M  &  4.42 & 2.70 & 6.32 & 3.65 \\
    TinyStories-33M & GPT-Neo & 33M   & 7.80 & 6.87 & 9.10 & 7.65 \\
    GPT-4 & GPT-4 & 1800B &  9.93 & 8.51 & 9.32 & 8.24 \\
    Ground Truth & /     & /  & 8.21 &  6.32 &  7.87  & 7.56 \\
    ARDTs (Ours)  & Decision Tree & 0.3M  & 7.85 & 4.10 & 7.36 &	5.39 \\
    \bottomrule
    \end{tabular}
    \label{tab:1}
    
    \begin{tablenotes}
        \footnotesize
        \item $^*$ For our decision trees, we report the total number of tree nodes in the ensemble as the parameter count.
        \item $\dag$ To minimize the impact of inconsistency on our results and enhance the robustness of our evaluation metrics, we calculated the average scores from ten assessments for each of the 100 stories. Each story was evaluated ten times using the same prompt provided to GPT-4.
    \end{tablenotes}
\vspace{-0.3cm}
\end{table}

% The decision tree is highly explainable because it provides transparency in the form of split rules at every node. This means that at each decision point within the tree, we can clearly see the specific criteria used to make a choice, making it a valuable tool for comprehending the decision-making process.

\subsection{Evaluating ARDTs in Language Reasoning Tasks} \label{exp:reasoning}

We now explore the potential of using decision trees for logical reasoning tasks using the Big-Bench-Hard dataset. The Big-Bench-Hard dataset, detailed in Appendix~\ref{appendix::dataset-details}, contains 23 challenging reasoning tasks from the BIG-Bench benchmark. We selected four representative reasoning tasks for evaluation, with examples provided in Appendix~\ref{appendix::dataset-details}.

Each task involves training a separate decision tree ensemble. These ensembles utilize a weighted average of input word embeddings, as described in Section~\ref{exp:setting}, using the word embedding layer from a pre-trained GPT-2 model trained on WebText. Each model is trained with 200 examples and tested on 50 examples. We also experiment with decision trees trained on top of a pre-trained GPT-2 Transformer model, where the output vectors from GPT-2 serve as input features for the decision trees, combining GPT-2’s advanced language understanding with the analytical capabilities of decision trees.

For establishing baselines, we follow the methodology of Suzgun et al. (2022) and use accuracy as the metric. InstructGPT, Codex, and PaLM 540B are used as baselines. 
% We evaluate using 100 story beginnings, each less than six words, generated by GPT-4. These beginnings serve as inputs for the models, enabling them to predict the next token and produce outputs of 20 words. The complete stories from the dataset serve as the ground truth.

As presented in Table~\ref{tab:big_bench}, our model demonstrates substantial effectiveness in reasoning tasks, with performance comparable to state-of-the-art methods. For instance, we observe improvements of 7.4\% in Boolean Expression tasks, 2\% in Navigate tasks, and 7.8\% in Sports Understanding tasks. Moreover, we find that further enhancements are possible by integrating decision trees with the GPT-2 Transformer, underscoring the significant impact of word embeddings on performance. However, his paper focuses on highlighting the potential of the ARDTs architecture, not word embeddings. Our results show that the ARDTs model has strong reasoning abilities.
% \eran{are we using the decision-trees auto-regressively to generate the answers? if not, make sure that this sentence reflects this} Our results suggest that hybrid architectures of decision trees and Transformers can be effectively utilized for solving reasoning tasks.

% \eran{we need to make sure that we emphasize that we compare pre-trained "general" models to a decision tree trained on similar problems from the task. In this sense, we need to think of the decision tree as a fine-tuning method..} \yulu{We used the GPT-2 embedding （it is different from the pretrain-fintune paradigm） but did not use the model for training, so I believe it is not related to fine-tuning.}

% \eran{describe the results}.

% \begin{figure}[htb]
%     \centering
%     \includegraphics[width=1. \linewidth]{}
%     \caption{\textbf{Temporary Figure.} Examples of the Auto-Regressive Decision Trees trained on Tiny Stories.}
%     \label{fig:2}
% \end{figure}

\begin{table*}[htb]
  \centering
  \small
  \renewcommand\arraystretch{1}
  \setlength\tabcolsep{2pt}
  \caption{Experimental Results on BIG-Bench-Hard. Lin: Linear Embedding; GPT: GPT-2 Embedding. The results demonstrate that ARDTs possess good reasoning capabilities.}
  \vspace{0.2cm}
    \begin{tabular}{ccccccccc}
    \toprule
    \multirow{2}[4]{*}{BIG-Bench Hard} & \multicolumn{2}{c}{\cite{srivastava2023imitation}} & \multirow{2}[4]{*}{Human-Rater} & \multirow{2}[4]{*}{InstructGPT} & \multirow{2}[4]{*}{Codex} & \multirow{2}[4]{*}{PaLM 540B} & \multicolumn{2}{c}{Ours} \\
\cmidrule{2-3}\cmidrule{8-9}  & Random & SOTA  &       &       &       &       & Lin    & $\text{GPT}$ \\
    \midrule
    Boolean Expressions & 50 & 68.5  & 79.4  & 90    & 88.4  & 83.2  & 72.0  & 85.3 \\
    Navigate & 50    & 56    & 81.9  & 68 & 50.4 & 62.4  & 55.4  & 69.2\\
    Web-of-Lies & 50    & 59.6  & 81.3 & 51.6  & 51.6  & 51.2  & 53.2  & 71.1\\
    Sports Understanding &  50  & 68.1  & 70.8  & 71.6  & 72.8  &  80.4  &  72.3  &  83.9\\
    \midrule
    All Tasks (avg) & 50  & 63.1  & 78.4  & 70.3  & 65.8  & 69.3  & 63.2  & 77.4\\
    \bottomrule
    \end{tabular}%
  \label{tab:big_bench}%

\end{table*}

\section{Discussion}\label{sec:discussion}

The findings in this paper demonstrate that tree-based models have potential in language generation. Although they do not yet match the performance of large language models, they possess certain advantages that make them valuable for studying the emergence of intelligence on a smaller scale. Decision trees are easier to interpret (see Appendix~\ref{appendix：interpretability} for more on interpretability using ARDTs), simpler to understand and analyze mathematically, and fast to train. Moreover, unlike standard neural networks, the inference time for decision trees typically increases \emph{logarithmically} with their size: a tree with depth $d$ can have $2^d$ nodes but only requires traversing $O(d)$ nodes per input.

This paper serves as a preliminary exploration into using ARDTs for language modeling tasks. We aim to inspire further research that integrates tree-based models into current language model pipelines, leveraging their unique strengths to enhance language generation capabilities. We believe incorporating tree-structured models into hybrid models with Transformers could be a promising direction for future research.

\subsection{Acknowledgement}

The first author would like to thank Shimian Li (Peking University) for helping to enhance the appearance of Figures 1, 2, and 4.

% \subsection*{Impact Statement}

% This paper presents work whose goal is to advance the field of Machine Learning. There are many potential societal consequences of our work, none which we feel must be specifically highlighted here.

\bibliography{main.bib}
\bibliographystyle{neurips}
\vfill
\appendix
\onecolumn

\section{Additional Proofs}
\label{appendix::proofs}

For any $\sD$, let $d = \left\lceil\log(\abs{\sD})\right\rceil+1$ and let $\Psi : \sD \to \{0,1\}^d$ be a one-to-one mapping of tokens to Boolean vectors, s.t. $\Psi_1(s) = 1$ for all $s \in \sD$.

\begin{definition}
    A function $f : \sD^L \to \sD$ is called $k$-Junta if there exists a set of separate indexes $i_1, \dots, i_k \in [L]$ and function $g : \sD^k \to \sD$ s.t. $f(\vx) = g(x_{i_1}, \dots, x_{i_{k}})$.
\end{definition}

\begin{lemma}
    \label{lem:state_tree}
    For every $k$-Junta $f : \sD^L \to \sD$, there exists a tree $\gT$ of size $O\left(\abs{\sD}^k\right)$ and depth $O(k \log \abs{\sD})$ s.t. $\gT(\Psi(\vx)) = f(\vx)$ for all $\vx \in \sD^L$.
\end{lemma}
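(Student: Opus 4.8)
The plan is to build the tree $\gT$ in two stages: first a ``selection'' stage that reads off the $k$ relevant coordinates $x_{i_1}, \dots, x_{i_k}$ by querying their Boolean encodings, and then a lookup that outputs $g$ of those coordinates. Since $f$ is a $k$-Junta, we only need to distinguish between the $\abs{\sD}^k$ possible values of $(x_{i_1}, \dots, x_{i_k})$. Each token $s \in \sD$ is encoded by $\Psi(s) \in \{0,1\}^d$ with $d = \lceil \log \abs{\sD}\rceil + 1$, and for a length-$L$ input $\vx$, $\Psi(\vx) \in \{0,1\}^{L \times d}$ stacks these encodings. The tree will only ever query coordinates of the form $\Psi(\vx)_{i_m, j}$ for $m \in [k]$ and $j \in [d]$; every other input feature is ignored, which is what makes $\gT$ a decision tree realizing a $k$-Junta.

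First I would construct, for a single relevant coordinate $i_m$, a small complete binary tree of depth $d$ that branches on $\Psi(\vx)_{i_m,1} \ge 1, \Psi(\vx)_{i_m,2} \ge 1, \dots, \Psi(\vx)_{i_m,d} \ge 1$ in sequence (since the encodings are Boolean, the threshold $1$ cleanly separates a $0$-bit from a $1$-bit). After these $d$ queries, the path taken identifies the bit-string $\Psi(x_{i_m})$, hence — because $\Psi$ is one-to-one — the token $x_{i_m}$ itself. This sub-tree has $2^d = O(\abs{\sD})$ leaves and depth $d = O(\log \abs{\sD})$. Next I would chain these gadgets: at each of the $\abs{\sD}$ leaves of the gadget for $i_1$ (one leaf per possible value of $x_{i_1}$), attach a fresh copy of the gadget for $i_2$; at each leaf of that, attach a gadget for $i_3$; and so on through $i_k$. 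After $k$ such stages the active leaf uniquely determines the tuple $(x_{i_1},\dots,x_{i_k})$, and I simply label that leaf with the output token $g(x_{i_1},\dots,x_{i_k}) \in \sD$. The total number of leaves is $O(\abs{\sD}^k)$ and the depth is $k \cdot d = O(k \log \abs{\sD})$, matching the claimed bounds; correctness is immediate since the leaf reached by $\Psi(\vx)$ is exactly the one labelled $g(x_{i_1},\dots,x_{i_k}) = f(\vx)$.

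There is no serious obstacle here — the result is essentially a counting/encoding bookkeeping argument — but the one point that needs a little care is verifying the threshold splits are well-defined and correct on the Boolean encodings: each query $\1\{\Psi(\vx)_{i_m,j} \ge 1\}$ must recover bit $j$ of $\Psi(x_{i_m})$ exactly, which holds because the entries are in $\{0,1\}$. A minor secondary point is confirming that attaching copies of the later gadgets at the leaves of the earlier ones multiplies the sizes (giving $\abs{\sD}^k$) rather than adding them, and that ``ignoring'' the irrelevant coordinates is legitimate in the decision-tree model (it is — a decision tree is free to never query a feature). I would also note in passing that the constant $1$ in the exponent of the encoding dimension (i.e., using $d = \lceil\log\abs{\sD}\rceil + 1$ rather than $\lceil\log\abs{\sD}\rceil$) only changes constants and is convenient later for the padding-token trick used in Theorem~\ref{thm:circuits}.
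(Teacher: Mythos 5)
Your construction is essentially identical to the paper's proof: the paper also builds a perfect binary tree of depth $dk$ whose levels query the conditions $\Psi_l(x_{i_j}) \ge 1$ for $(j,l) \in [k] \times [d]$, so that each leaf identifies the tuple $(x_{i_1},\dots,x_{i_k})$ and is labelled $g(x_{i_1},\dots,x_{i_k})$. Your version just spells out the chaining of the per-coordinate gadgets and the size/depth accounting more explicitly; the argument is correct.
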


\begin{proof}
Let $\gT$ the perfect binary tree of depth $dk$, where each level of the tree corresponds to a pair $(j,l) \in [k] \times [d]$, and all the nodes at the level implement the condition $\Psi_l(x_{i_j}) \ge 1$. Observe that in this construction, each leaf correspond to a specific choice of values for $\Psi(x_{i_1}), \dots, \Psi(x_{i_k})$, and we can set its output to be $g(x_{i_1}, \dots, x_{i_k})$.
\end{proof}

\begin{proof}[Proof of \autoref{thm:automata}]
    Let $\gA$ be some automaton, defined by transition function $\delta : Q \times \Sigma \to Q$, and we can arbitrarily extend it to $\delta : \sD^2 \to \sD$ s.t. $\delta(x, \PAD) = q_0$ for all $x \in \sD$. Then, from \autoref{lem:state_tree} there exists some tree $\gT$ of size $O(\abs{\sD}^2)$ s.t. for all $\vx \in \sD^L$ it holds that $\gT(\Psi(\vx)) = \delta(x_L, x_{L-n})$. 
    
    We prove by induction that for all $i \in [n]$ it holds that $\ARDT_i(\vx) = q_i$, where $q_i$ is the state of the automaton $\gA$ at iteration $i$.

    \begin{itemize}
        \item Let $\vz \in \reals^{L, d}$ be the padded output of $\Psi(\vx)$, i.e. $\vz = \left[\Psi(\PAD), \dots, \Psi(\PAD), \Psi(x_1), \dots, \Psi(x_n) \right]$. Note that since $x_{L-n} = \PAD$ we have $\ARDT_1(\vx) = \gT(\vz) = \delta(x_L, \PAD) = q_1$.
        \item Assume that $\ARDT_{1:i-1}(\vx) = (q_1, \dots, q_{i-1})$. Therefore, 
        \begin{align*}
        \ARDT_i(\vx) &= \gT(\Psi(\PAD, \dots, \PAD, x_1, \dots, x_n, q_1, \dots, q_{i-1})) \\
        &= \delta(q_{i-1}, x_i) = q_i
        \end{align*}
    \end{itemize}
    Therefore, the required follows.
\end{proof}

% \begin{proof}[Proof of \autoref{thm:lower_bound}]
% We restrict to the case where $\abs{\Sigma} = 2$ and $\abs{Q} = 2$, and note that this does not limit the result. We denote $\Sigma = \{0,1\}$ and $Q = \{\even, \odd\}$.
% Define $f : \Sigma^n \to Q$ s.t. $f(\vx) = \begin{cases}
% \even & \sum x_i \mod 2 = 0 \\
% \odd & \mathrm{otherwise}
% \end{cases}$.

% Observe that $f$ can easily be computed by an automaton, alternating between $\even$ and $\odd$ whenever $1$ occurs in the sequence.

% We now show that if $f$ is computed by a decision tree $\gT$, then the size of $\gT$ is at least $2^n$. Assume towards contradiction that the size of $\gT$ is smaller than $2^n$. W.l.o.g. we assume that all the leafs of $\gT$ can be reached by some input. Then, there exists a leaf $l$ of depth $<n$ reachable by some input $\vx \in \{0,1\}^n$, and let $y \in \{\even,\odd\}$ be the output of this leaf. Since the depth of $l$ is $<n$ there exists some coordinate $j \in [n]$ s.t. there is no node on the path from root to $l$ that depends on the $x_j$. So, let $\vx' \in \{0,1\}^n$ s.t. $x'_i = x_i$ for all $i \ne j$ and $x'_j = \neg x_j$. Therefore, $\vx'$ also reaches $l$ and therefore $\gT(\vx) = \gT(\vx')$. But clearly $f(\vx) \ne f(\vx')$, and therefore $\gT$ does not compute $f$.

% \end{proof}

\begin{proof}[Proof of Theorem \ref{thm:turing}]
    We encode the state of the Turing machine by a string $\vs \in \sD^{M+1}$ as follows: if the head is in state $q \in Q$ and at position $i \in [M]$, and the memory is $m_1, \dots, m_M \in \Sigma$, we set $\vs = (m_1, \dots, m_{i-1}, q, m_i, \dots, m_M)$. That is, we add a token indicating the state of the head \emph{before} the cell where the head is located. Let $\delta : Q \times \Sigma \to Q \times \Sigma \times \{\LEFT, \RIGHT\}$ be the transition function of the Turing machine. We define the following function $g : \sD^{4} \to \sD^{4}$:
    \[
    g(\vs) = \begin{cases}
        x_2 & \If~x_1, x_2, x_3 \notin Q \\
        q & \If~x_1 \in Q~\mathrm{and}~\delta(x_1,x_2) = (q, \alpha, \RIGHT) \\
        \alpha & \If~x_1 \in Q~\mathrm{and}~\delta(x_1,x_2) = (q, \alpha, \LEFT) \\
        \alpha & \If~x_2 \in Q~\mathrm{and}~\delta(x_2,x_3) = (q, \alpha, \RIGHT) \\
        x_1 & \If~x_2 \in Q~\mathrm{and}~\delta(x_2,x_3) = (q, \alpha, \LEFT) \\
        x_2 & \If~x_3 \in Q~\mathrm{and}~\delta(x_3,x_4) = (q, \alpha, \RIGHT) \\
        q & \If~x_3 \in Q~\mathrm{and}~\delta(x_3,x_4) = (q, \alpha, \LEFT) \\
    \end{cases}
    \]

Observe that the function $f : \sD^{M+1} \to \sD^{M+1}$ s.t. $f_i(\vs) = g(s_{i-1}, s_{i}, s_{i+1}, s_{i+2})$ exactly defines the transition between the encoded states of the Turing machine. Namely, if the state of the machine at iteration $i$ is $\vs$, then the state at iteration $i+1$ is $f(\vs)$. We slightly modify $g$ to handle the generation of the first iteration, as follows:
$$\tilde{g}(\vs) = \begin{cases}
    \SEP & x_1 = \PAD~\mathrm{and}~x_2 = \PAD~\mathrm{and}~x_3=\PAD \\
    q_0 & x_1 = \PAD~\mathrm{and}~x_2 = \PAD~\mathrm{and}~x_3 \ne \PAD \\
    
    \SEP & x_2 = \SEP \\
    g(\vs) & \mathrm{otherwise}
\end{cases}$$
Now, from Lemma \ref{lem:state_tree} there exists a tree $\gT$ of size $O(\abs{\sD}^4)$ s.t. $\gT(\Psi(\vx)) = \tilde{g}(x_1, x_2, x_3, x_4)$.

Let $\vs_1, \dots, \vs_T \in \sD^{M+1}$ the encodings of the state of the Turing machine at iterations $1, \dots, T$.
Let $\vx \in \sD^L$ be the encoding of the input, staring with $\PAD$ tokens, followed by one $\BOS$ token and the input string. Denote the output of the ARDT $\ARDT$ after $T \cdot (M+2)$ given the input $\vx$, where we split the output into chunks of size $M+2$ by:
\[
\ARDT(\vx) = (\vz_1, \dots, \vz_T) \in \sD^{T \cdot (M+2)}, ~\vz_i \in \sD^{M+2}
\]

\textbf{Claim:} For all $i \in [T]$, it holds that $\vz_i = (\SEP, \vs_i)$.

\textbf{Prove:} We prove by induction on $i$.
\begin{itemize}
    \item For $i = 1$, notice that the input begins with 3 $\PAD$ tokens, followed by the input tokens $x_1, \dots, x_M$, and therefore by definition of $\tilde{g}$ we get $\vz_1 = (\SEP, q_0, x_1, \dots, x_M) = (\SEP, \vs_1)$.
    \item Assume the required holds for $i$. First, observe that
    $$z_{i+1,1} = \gT(\Psi(s_{i-1,M+1}, \SEP, s_{i,1}, \dots, s_{i,M+1})) = \SEP$$
    Now, assume that $\vz_{i+1,1:j} = (\SEP, s_{i+1,1}, \dots, s_{i+1,j-1})$. Therefore
    \begin{align*}
        z_{i+1,j+1} &= \gT(\Psi(s_{i, j-1}, s_{i,j}, s_{i,j+1}, \dots, s_{i,M+1}, \SEP, s_{i+1,1}, \dots, s_{i+1, j-1}))\\
        &=g(s_{i,j-1}, s_{i,j}, s_{i,j+1}, s_{i,j+2}) = s_{i+1, j}
    \end{align*}
    and by induction we get $\vz_{i+1} = (\SEP, \vs_{i+1})$
\end{itemize}

Therefore, $\gT$ outputs the final token of iteration $T$ after $T(M+2)$ steps of auto-regression, which proves the theorem.
\end{proof}

\section{Additional Implementation Details}\label{app:additional_details}

\subsection{Hardware \& Computational Cost}

Our experiments were conducted on a single NVIDIA A100 GPU. For the Tiny Stories experiments, the training process took approximately 1 hour, and it required about 1 second to generate 20 words during the inference phase.

\subsection{Dataset Details}\label{appendix::dataset-details}

\textbf{Tiny Stories.} As shown in Tab.~\ref{tab:dataset-details-tiny}, the training and validation datasets of Tiny Stories contain 147,273 and 21,990 stories, respectively. We use NLTK \cite{bird2009natural} as the tokenizer to obtain 420,351,665 and 4,329,963 tokens from the training dataset. In the training dataset and validation dataset, the number of words in the vocabulary is 27,455 and 11,273, respectively.

\textbf{BIG-Bench-Hard} is a dataset contains the selection of 23 difficult tasks from the BIG-Bench. These tasks are identified by their resistance to being outperformed by prior language model evaluations when compared to the average human evaluator. The BIG-Bench-Hard tasks often demand complex, multi-step reasoning, and the use of few-shot prompting without CoT, as previously utilized in BIG-Bench evaluations \cite{srivastava2023imitation}, significantly underrepresents the true potential and performance of language models.

Four representative reasoning tasks we select for evaluate our ARDTs:

(1) \textit{Boolean Expressions}. Example: not (True) and (True). Answer: False.

(2) \textit{Navigate}. Example: If you follow these instructions, will you return to the starting point? Instructions: Turn left. Take 5 steps. Turn right. Answer: No.

(3) \textit{Web-of-Lies}. Example: Delbert tells the truth. Delfina says Delbert lies. Antwan says Delfina tells the truth. Does Delfina tell the truth? Answer: No.

(4) \textit{Sports Understanding}. Example: Is the following sentence plausible? "Elias Lindholm beat the buzzer." Answer: No.

\begin{table}[htbp]
  \centering
  \small
  \caption{Basic Information about the Tinystories Dataset.}
  \vspace{0.2cm}
  \renewcommand\arraystretch{1.2}
  \setlength\tabcolsep{4pt}
    \begin{tabular}{lcc}
    \toprule
          & \multicolumn{1}{l}{Training dataset} & \multicolumn{1}{l}{Validation dataset} \\
    \midrule
    The number of stories & 147,273 & 21,990 \\
    The number of tokens & 420,351,665 & 4,329,963 \\
    The word count of each story. &  54 - 5,498   & 63 - 4,254 \\
    Vocabulary &   27455    & 11274  \\
    \bottomrule
    \end{tabular}%
  \label{tab:dataset-details-tiny}%
\end{table}%

% \subsection{Generating Chain-of-Thoughts Data}\label{appendix::data-generation}

% We show some of our generated Chain-of-Thoughts Data in Table~\ref{appendix:tab-generated-CoT-BE} and Table~\ref{appendix:tab-generated-CoT-Navigate}. We generate them programmatically.

\subsection{Details about the Visualization of the Decision Trees}\label{appendix::vis-details}

To enable visualization that treats words as features, as shown in Algorithm~\ref{alg:embedding_mapping}, we map word embeddings into a lower-dimensional space. This process utilizes three primary inputs: word embeddings \(W\) in an \(N \times 100\) matrix, where \(N\) represents the number of words and 100 the dimensionality of each embedding; cluster centers \(C\) in a \(20 \times 100\) matrix, indicating 20 clusters within the 100-dimensional embedding space; and a mapping matrix \(M\) sized \(100 \times 20\), designed to reduce the embeddings' dimensionality to 20. The algorithm begins with an orthogonalization procedure, applying QR decomposition to the transpose of \(C\) (\(C^T\)) and returning the first 20 columns of \(Q^T\), thereby establishing an orthogonal basis for the cluster space. It then projects the word embeddings \(W\) into this lower-dimensional space by multiplying them with the mapping matrix \(M\). By iterating over each word embedding in \(W\), the algorithm applies this projection and ultimately returns a set of transformed embeddings \(\{E_1, \dots, E_N\}\), where each \(E_i\) provides a lower-dimensional representation of the corresponding word embedding. This approach allows us to treat each vector value as individual words, facilitating a more intuitive understanding of the data.

\begin{algorithm}
\caption{Map Word Embeddings to Lower Dimensional Space}\label{alg:embedding_mapping}
\begin{algorithmic}
\STATE \textbf{input}
\STATE ~~Word Embeddings $W \subseteq \mathbb{R}^{N \times 100}$
\STATE ~~Cluster Centers $C \subseteq \mathbb{R}^{20 \times 100}$
\STATE ~~Mapping Matrix $M \subseteq \mathbb{R}^{100 \times 20}$
\STATE \textbf{procedure} \textsc{Orthogonalize}($C$)
\STATE ~~$Q, R \gets \text{QR Decomposition}(C^T)$
\STATE ~~\textbf{return} $Q[:, :20].T$
\STATE \textbf{end procedure}
\STATE 
\STATE \textbf{procedure} \textsc{MapEmbeddings}($W$, $M$)
\STATE ~~\textbf{return} $W \cdot M$
\STATE \textbf{end procedure}
\STATE 
\STATE $M \gets \textsc{Orthogonalize}(C)$
\FOR{each word embedding $w_i$ in $W$}
\STATE $E_i \gets \textsc{MapEmbeddings}(w_i, M)$
\ENDFOR
\STATE \textbf{Return} $\{E_1, \dots, E_{N}\}$
\end{algorithmic}
\end{algorithm}

\begin{table}[htbp]
  \centering
  \caption{Generated Chain-of-Thought data for boolean expression.}
  \vspace{0.2cm}
  \small
  \begin{tabular}{cp{15em}p{18em}}
    \toprule
    Input Prompt & not not ( not ( False ) ) is & True and False and not True and True is \\
    \midrule
    \multirow{15}{*}{Generated Prompt} & Let's think step by step.\newline{}Remember that (i) expressions inside brackets are always evaluated first and that (ii) the order of operations from highest priority to lowest priority is "not", "and", "or", respectively. We first simplify this expression "Z" as follows: "Z = not not ( not ( False ) ) = not not ( A )" where "A = not ( False )". Let's evaluate A: A = not ( False ) = not False = True. Plugging in A, we get: Z = not not ( A ) = not not (True) = not not False = True. So the answer is True. & Let's think step by step. Remember that (i) expressions inside brackets are always evaluated first and that (ii) the order of operations from highest priority to lowest priority is "not", "and", "or", respectively. We first simplify this expression "Z" as follows: "Z = True and False and not True and True = A and B" where "A = True and False" and "B = not True and True". Let's evaluate A: A = True and False = False. Let's evaluate B: B = not True and True = not (True and True) = not (True) = False. Plugging in A and B, we get: Z = A and B = False and False = False. So the answer is False. \\
    \bottomrule
  \end{tabular}%
  \label{appendix:tab-generated-CoT-BE}%
\end{table}%

\begin{table}[htbp]
  \centering
  \small
  \caption{Generated Chain-of-Thought data for navigation.}
  \vspace{0.2cm}
    \begin{tabular}{cp{14em}p{19.5em}}
    \toprule
    \multirow{5}{*}{Input Prompt} &  If you follow these instructions, do you return to the starting point? Always face forward. Take 2 steps right. Take 9 steps left.Take 7 steps right. &  If you follow these instructions, do you return to the starting point? Turn around. Take 1 step. Take 6 steps. Turn around. Take 6 steps. Take 9 steps. Take 1 step. \\
    \midrule
    \multirow{12}{*}{Generated Prompt} & Let's think step by step.We start at the origin (0, 0), facing the positive y-axis.(1) Always face forward: (0, 0), facing the positive y-axis.(2) Take 2 steps right: (0, 2), facing the positive y-axis.(3) Take 9 steps left: (0, -7), facing the positive y-axis.(4) Take 7 steps right: (0, 7), facing the positive y-axis. Since (0, 0) is (0, 0), we are indeed where we started. So the answer is Yes. &  Let's think step by step. We start at the origin (0, 0), facing the positive y-axis. (1) Turn around: (0, 0), facing the negative y-axis. (2) Take 1 step: (0, -1), facing the negative y-axis. (3) Take 6 steps: (0, -7), facing the negative y-axis. (4) Turn around: (0, -7), facing the positive y-axis. (5) Take 6 steps: (0, -1), facing the positive y-axis. (6) Take 9 steps: (0, 8), facing the positive y-axis. (7) Take 1 step: (0, 9), facing the positive y-axis. Since (0, 9) is not (0, 0), we are not where we started. So the answer is No. \\
    \bottomrule
    \end{tabular}%
  \label{appendix:tab-generated-CoT-Navigate}%
\end{table}%

\subsection{The Story Beginnings Used in Our Evaluation}\label{appendix:example_beginnings}

In Table~\ref{appendix:tab-examples-stories-beginnings}, we provide 10 examples over 100 story beginnings we used for our experiments on Tiny Stories.
\begin{table}[htbp]
  \centering
  \small
  \renewcommand\arraystretch{1.3}
  \setlength\tabcolsep{30pt}
  \caption{Examples of the story beginnings generated by GPT-4. In the inference phase, we fed these story beginnings generated by GPT-4 into the ARDTs to continue writting the stories.}
  \vspace{0.2cm}
    \begin{tabular}{cc}
    \toprule
          & The beginning of the stories (GPT-4 generated) \\
    \midrule
    1     & Kitty found a bright, lost treasure. \\
    2     & Thirsty, they sought a hidden pond. \\
    3     & Roxy climbed, seeking icy hill's peak. \\
    4     & Billy's idea sparked an unlikely friendship. \\
    5     & Leaves underfoot, Roxy ascended triumphantly. \\
    6     & Daisy, small yet happy, met Max. \\
    7     & Max's playful antics delighted tiny Daisy. \\
    8     & Sue's thoughtful gesture warmed her mother. \\
    9     & A candle's flame united mother, daughter. \\
    10    & The sad cow wished for companionship. \\
    \bottomrule
    \end{tabular}%
  \label{appendix:tab-examples-stories-beginnings}%
\end{table}%

\subsection{An Example on Multidimensional Scores}\label{appendix:metrics}

As an example, we took the story that appears in the example above and cut it after the word "loud". We generated a completion of stories using our ARDTs and baselines, then we prompted GPT-4 to first provide a verbal assessment and then grade the completion. The transcript of interaction is as follows, with the parts produced by GPT-4 in italic font:
{\footnotesize
\begin{quote}
\textbf{the following exercise, the student is given a beginning of a story. The student needs to complete it into a full story. The exercise tests the student's language abilities and creativity. The symbol $* * *$ marks the separator between the prescribed beginning and the student's completion}:

Once upon a time, *** \hl{there was a little girl named Lily. She loved to play outside in the sunshine. One day, she saw a big tree in the garden. She was so excited about it.}

\textbf{Please provide your general assessment about the part written by the student (the one after the *** symbol). Is it grammatically correct? Is it consistent with the beginning of the story? Pay special attention to whether the student manages to complete the sentence which is split in the middle by the separator ***.}

\textit{This is a charming and whimsical story about a little girl named Lily who feels excited upon discovering a big tree in the garden.}

\it{The grammar is generally correct.}

\it{Overall, the student's completion of the story demonstrates adequate language abilities and creativity. However, adding more details about Lily's exploration, as well as her thoughts and actions after discovering the tree, could make the story more engaging and vivid.}

\textbf{Now, grade the student's completion in terms of grammar, creativity, consistency with the story's beginning and whether the plot makes sense. Moreover, please provide your best guess of what the age of the student might be, as reflected from the completion. Choose from possible age groups: A: 3 or under. B: 4-5. C: 6-7. D: 8-9. E: 10-12. F: 13-16.}

\it{Grammar: 9/10, Creativity: 4/10, Consistency: 7/10, Age group: E (10-12)}
\end{quote}
}

\section{Interpretability}\label{appendix：interpretability}

One advantage of using decision trees over complex neural networks is the Interpretability of the computational process. Unlike neural networks, which rely on abstract vector operations in high-dimensional space, decision trees implement relatively simple logic, computing their output based on splitting criteria that may be easily displayed to, and interpreted by, humans. That said, recall that our decision trees operate on aggregations of word vector embeddings, which make interpretability harder to achieve. Specifically, each splitting rule of the decision tree is based on the value of a single coordinate, which does not necessarily have an interpretable semantic value when using rotation-invariant word embedding methods such as Word2Vec.

\begin{figure}[htb]
    \centering
    \includegraphics[width=0.9\linewidth]{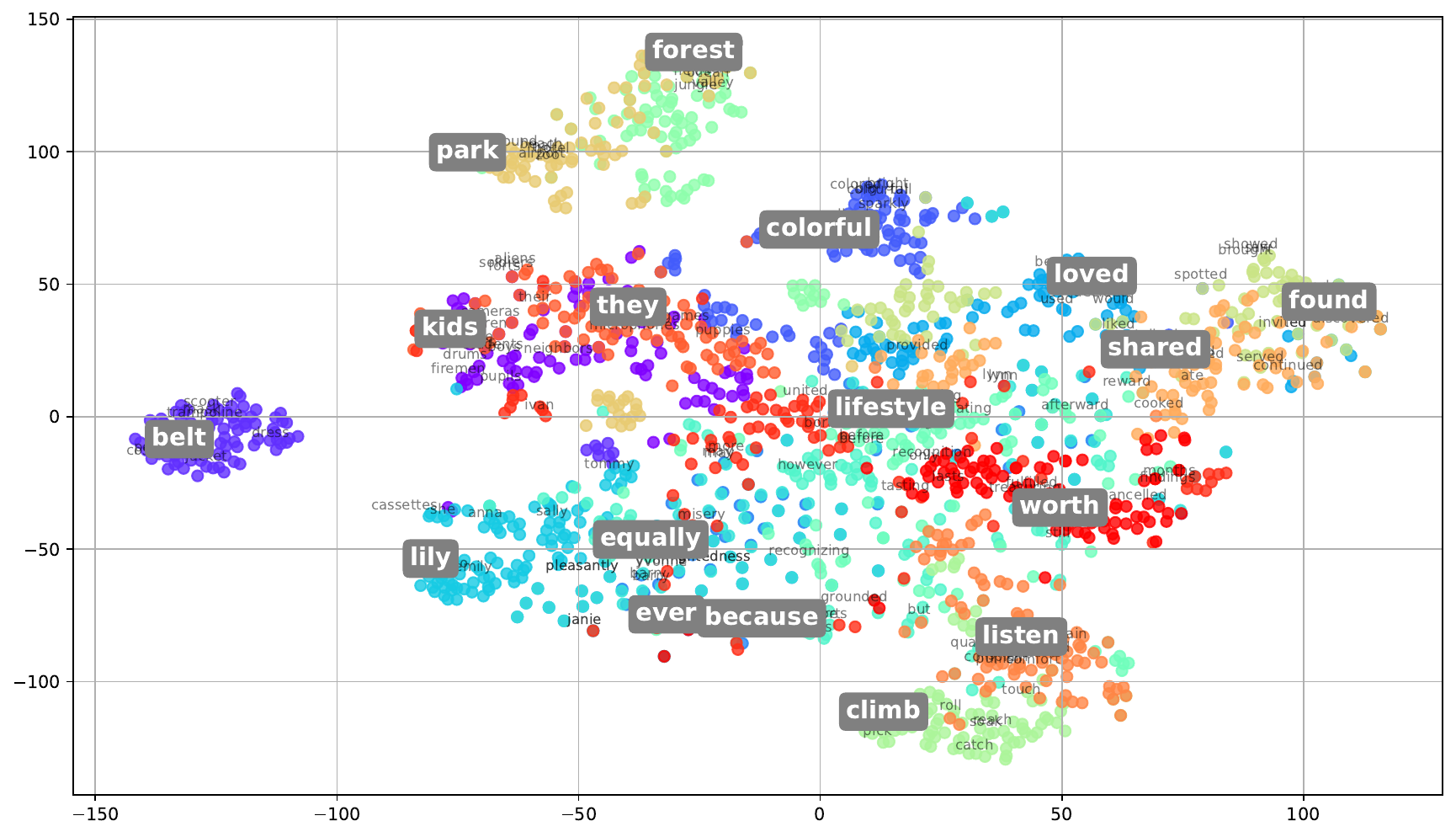}
    \vspace{-0.1cm}
    \caption{\textbf{t-SNE \cite{vandermaaten2013barneshutsne} visualization of 20 cluster centers.} We selected 20 cluster centers and display 4 words closest to the cluster centers.}
    \vspace{-0.2cm}
    \label{fig:clusters}
\end{figure}

\begin{figure*}[htb]
    \centering
    \includegraphics[width=0.9\linewidth]{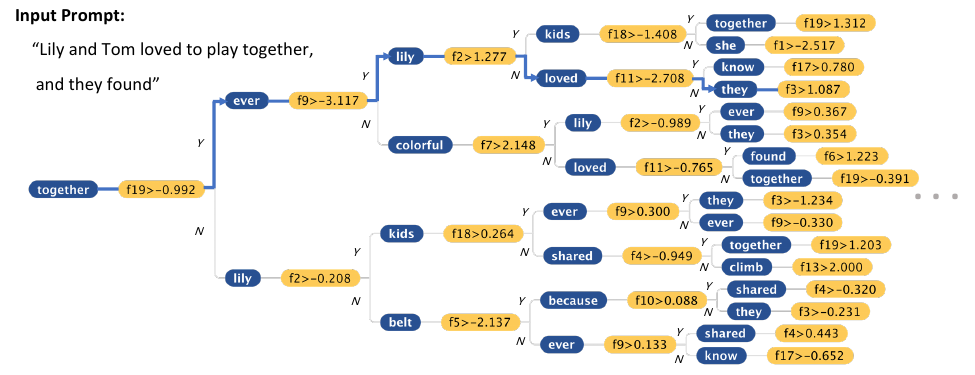}
    \vspace{-0.1cm}
    \caption{\textbf{Track the decision-making process within the decision trees.} We use 'Lily and Tom loved to play together, and they found' as an the input prompt and generate the next word using our ARDTs. We visualize part of the process within the decision tree. Specifically, we visualized 31 nodes of the first decision tree. 
    % \eran{Add yes/no after each node, and I think I would put yes on top (minor). Does the tree branch makes sense?}
    }
    \vspace{-0.1cm}
    \label{fig:track-sentence}
\end{figure*}

\begin{figure}[htb]
    \centering
    \includegraphics[width=0.9\linewidth]{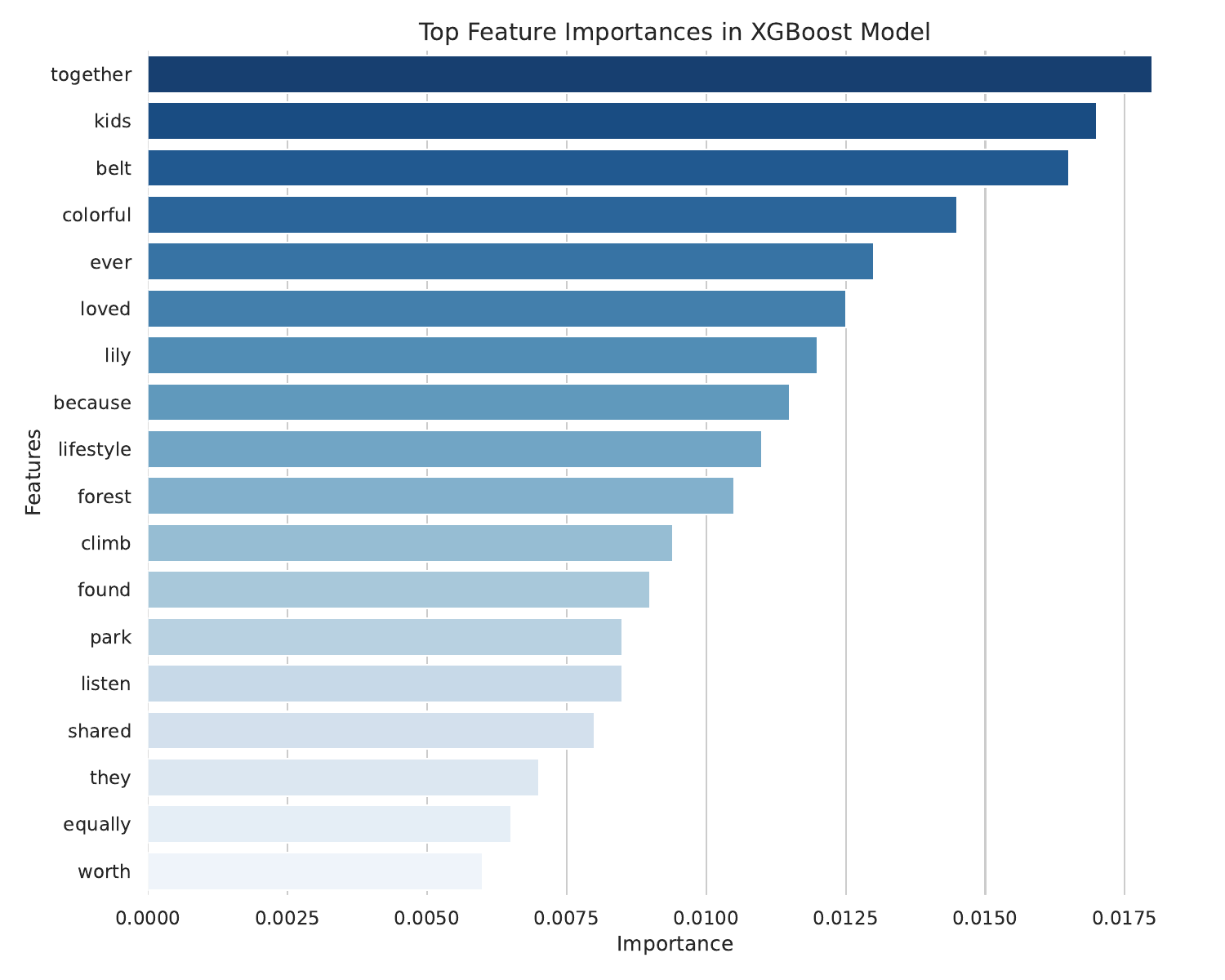}
    \caption{Feature Importance. We present the feature importance of the top 20 words most closely associated with each cluster, based on their average gain.}
    \label{fig:features_importance}
\end{figure}

In order to generate decision trees with meaningful splitting rules, we modify the word embedding such that single coordinates have specific semantic values. To achieve this, we begin by clustering all the word vectors from the dataset (over 16K words) into 20 clusters using K-means. We then choose one representative word for each cluster, by taking the word that is closest to the center of the cluster in the embedding space (see Figure \ref{fig:clusters} for an illustration of the clusters and representative words). Now, these words (represented as vectors) form a basis for a \emph{new} 20-dimensional embedding space, which is a linear subspace of the original 100-dimensional space of Word2Vec. We use these basis words to compute the new word embedding, by projecting each vector from the original space into this subspace, and representating the projection as a linear combination of the basis words. Mathematically, if $x_1, \dots, x_k$ are the basis words, we define our new embedding $\Phi$ into $\reals^k$ by:
$\Phi(x) = \arg \min_{\vz \in \reals^k} \norm{\sum_{i}z_i\Psi(x_i) - \Psi(x)}_2$.
Observe that each basis word $x_i$ is mapped by $\Phi$ to a unit vector $e_i$. Intuitively, the $i$-th coordinate of the embedding $\Phi$ now represents words that are semantically similar to the word $x_i$. Now, splitting rules based on the coordinate $i$ can be interpreted as ``testing'' whether a word similar to $x_i$ appears in the sentence.

We visualize one of the decision trees trained on the Tiny Stories Dataset using the new ``interpretable'' embedding $\Phi$ in Figure~\ref{fig:1}. 
Note that, unlike complex neural network architectures, which carry out opaque computations, the decision process of the ARDT with the new embedding appears to be semantically meaningful. For example, observe that the word \textit{Lily} appears for three times as the most relevant word during node splits. Considering \textit{Lily} is a frequently occurring name in the Tiny Stories dataset, it's frequent appearance in the tree can be deemed reasonable. 
% We can further analyze the importance of different features by plotting their importance score computed by XGBoost \cite{xgboost}. 
We further analyze the importance of different features by plotting their importance score. 
We plot the importance of each cluster, represented by a single word, in Figure \ref{fig:features_importance}. We assess the importance of each cluster by calculating its average gain during every split within the model.
% \eran{how is this computed?} 

In Figure~\ref{fig:track-sentence}, we use the input sentence ``Lily and Tom loved to play together and they found'' as an example to visualize part of the decision-making process of the first decision tree in the ensemble. We note that each feature corresponds to a single cluster, represented by a single word, e.g. the feature $f_2$ corresponds to the word ``Lily''. That is, the word ``Lily'' will be mapped to the unit vector $e_2 = (0,1,0,\dots, 0)$. Note that most words (besides the 20 words used as a basis for the embedding), will be mapped to a linear combination of the basis words, and so can also affect (positively or negatively) the value of the feature $f_2$. Since the input vector is a weighted-average of the embedding of all words, the decision when splitting on the feature $f_2$ may be affected by multiple words in the sentence.

\end{document}